\definecolor{darkblue}{rgb}{0.0,0.0,0.55}
\newcommand{\E}{{\bf E}}
\newcommand{\Nor}{{\cal N}}  
\newcommand{\NormRV}{\mathcal{N}}
\newcommand{\CauchyRV}{\mathcal{C}}
\newcommand{\UnifRV}{\mathcal{U}}
\newcommand{\btheta}{\boldsymbol{\theta}}
\def\CS{Cauchy-Schl\"omilch}
\newcommand{\intreal}{\int_{-\infty}^{\infty}}
\newcommand{\intpos}{\int_{0}^{\infty}}
\newcommand{\intunit}{\int_{0}^{1}}
\newcommand{\ben}{\begin{enumerate}}
\newcommand{\een}{\end{enumerate}}
\newcommand{\beq}{\begin{equation}}
\newcommand{\eeq}{\end{equation}}
\newcommand{\half}{\frac{1}{2}}
\newcommand{\argmin}{\operatornamewithlimits{argmin}}
\newcommand{\argmax}{\operatornamewithlimits{argmax}}
\newcommand{\abs}[1]{\lvert#1\rvert}
\newcommand{\vectornorm}[1]{\left|\left|#1\right|\right|}
\newcommand{\iidd}{\stackrel{\mathrm{D}}{=}}
\DeclarePairedDelimiterX\MeijerM[3]{\lparen}{\rparen}%
{\begin{smallmatrix}#1 \\ #2\end{smallmatrix}\delimsize\vert\,#3}
\newcommand\MeijerG[8][]{%
  G^{\,#2,#3}_{#4,#5}\MeijerM[#1]{#6}{#7}{#8}}
\newcommand\MeijerG*[7]{G^{\,#1,#2}_{#3,#4}\MeijerM*{#5}{#6}{#7}}
\DeclarePairedDelimiterX\pFqM[3]{\lparen}{\rparen}%
{\begin{smallmatrix}#1 \\ #2\end{smallmatrix}\delimsize\vert\,#3}
\newcommand\pFq[6][]{%
  {}_{#2}F_{#3}\pFqM[#1]{#4}{#5}{#6}}
\newcommand\pFq*[5]{{}_{#1}F_{#2}\pFqM*{#3}{#4}{#5}}
\numberwithin{theorem}{section}
\numberwithin{Def}{section}
\numberwithin{remark}{section}
\newtheorem{proposition}{PROPOSITION}
\numberwithin{proposition}{section}
\newtheorem{lemma}{LEMMA}
\numberwithin{lemma}{section}
\numberwithin{Cor}{section}
\def\ttabular{%
\hbox\bgroup
\let\\\cr
\def\rulea{\ifnum\rowc=\@ne \hrule height 1.0pt \fi}
\def\ruleb{
\ifnum\rowc=1\hrule height 1.0pt  \else
\ifnum\rowc= 3  \hrule height 0.5pt \else
\ifnum\rowc= 5  \hrule height 0.5pt \else
\ifnum\rowc= 7  \hrule height 0.5pt \else
\ifnum\rowc= 9  \hrule height 0.5pt \else
\ifnum\rowc= 11  \hrule height 0.5pt 
  \else \hrule height 0pt
\fi\fi\fi\fi\fi\fi}
\valign\bgroup
\global\rowc\@ne
\rulea
\hbox to 7em{\strut \hfill##\hfill}%
\ruleb
&&%
\global\advance\rowc\@ne
\hbox to 7em{\strut\hfill##\hfill}%
\ruleb
\cr}
\def\endttabular{%
\crcr\egroup\egroup}
\renewcommand{\E}{\mathbb E}
\title{\vspace{-1cm} \baselineskip=20pt  \bf Horseshoe Regularization for Feature Subset Selection}
\date{}
\begin{document}
\maketitle
\baselineskip=15pt
\begin{center}
\vspace{-1cm}
\author Anindya Bhadra  \footnotemark[1]   \footnotetext[1]{{\em Address:} Department of Statistics, Purdue University, 250 N. University St., West Lafayette, IN 47907, USA.} 
\and Jyotishka Datta  \footnotemark[2]   \footnotetext[2]{{\em Address:} Department of Mathematical Sciences, University of Arkansas, Fayetteville, AR 72701, USA.}
\and Nicholas G.~Polson  \footnotemark[3] \footnotetext[3]{{\em Address:} Booth School of Business, The University of Chicago, 5807 S. Woodlawn Ave., Chicago, IL 60637, USA.} and Brandon Willard  \footnotemark[3]
\end{center}
\begin{abstract} 
\noindent Feature subset selection arises in many high-dimensional applications of statistics, such as compressed sensing and genomics. The $\ell_0$ penalty is ideal for this task, the caveat being it requires the NP-hard combinatorial evaluation of all models. A recent area of considerable interest is to develop efficient algorithms to fit models with a non-convex $\ell_\gamma$ penalty for $\gamma\in (0,1)$, which results in sparser models than the convex $\ell_1$ or lasso penalty, but is harder to fit. We propose an alternative, termed the horseshoe regularization penalty for feature subset selection, and demonstrate its theoretical and computational advantages. The distinguishing feature from existing non-convex optimization approaches is a full probabilistic representation of the penalty as the negative of the logarithm of a suitable prior, which in turn enables efficient expectation-maximization and local linear approximation algorithms for optimization and MCMC for uncertainty quantification. In synthetic and real data, the resulting algorithms provide better statistical performance, and the computation requires a fraction of time of state-of-the-art non-convex solvers.\\
\\
{\bf Keywords:} Bayesian methods; feature selection; horseshoe estimator; non-convex regularization; scale mixtures.
\end{abstract} 
\baselineskip=15pt
\section{Introduction}
Feature subset selection is typically performed by convex penalties such as the lasso \citep{tibshirani96}, the elastic net \citep{zou2005regularization}, or their variants. Convex penalties enjoy a number of advantages, such as uniqueness of solution, efficient computation and relatively straightforward theoretical analysis. Convex penalties, however, suffer from some undesirable features. For example, the lasso, which is based on a soft thresholding operation, leaves a constant bias that does not go to zero for large signals. A consequence is poor mean squared error in estimation.  The lasso also suffers from problems in presence of correlated variables. Non-convex penalties, on the other hand, can result in optimal theoretical performances for variable selection \citep{fan2001variable}. However, the computational burden of fitting non-convex penalties is more challenging. In this article, we take a Bayesian view of the optimization problem as finding the posterior mode under a given prior. Our approach is probabilistic, which enables a latent variable representation and results in efficient expectation-maximization \citep{dempster1977} and local linear approximation \citep{zou2008one} algorithms for optimization, as well as a Markov chain Monte Carlo (MCMC) scheme for posterior simulation. The performance comparison in simulations reveals the proposed regularization provides better statistical performance, while allowing much faster computation compared to state-of-the-art non-convex solvers. 

\subsection{Related Works in Non-Convex Regularization}
Consider the sparse normal means model where we observe $(y_i \mid \theta_i) \stackrel{ind}\sim \Nor(\theta_i, 1)$ for $i=1,\ldots, n$, where $\#(\theta_i \ne 0)\le p_n$ and $p_n = o(n)$ as $n \to \infty$. Non-convex regularization problems arise from a need to correctly identify the zero components in $\theta = (\theta_1, \ldots, \theta_n)$, given observations $y=(y_1,\ldots, y_n)$, also known as subset selection.  The  $\ell_0$ penalty, defined as $||\theta||_0 =\sum_{i=1}^{n} 1(|\theta_i|>0)$, is ideal for this task, and the more commonly used lasso or convex $\ell_1$ penalty, $||\theta||_1 = \sum_{i=1}^{n} |\theta_i|$, tends to select a denser model \citep{mazumder2012}. Unfortunately, na\"ively using the $\ell_0$ penalty requires a combinatorial evaluation of all $2^n$ models, which is NP-hard \citep{natarajan95}. Penalties of the form $\ell_\gamma$ for $\gamma \ge 1$ give rise to convex problems and efficient solvers are available. It remains a challenge to fit models with $\ell_\gamma$ penalties for $\gamma\in (0,1)$. While this does not necessarily a present combinatorial problem, the regularization problem is non-convex. Thus, the general purpose tools for convex optimization do not apply, nor is a unique solution guaranteed \citep[see, e.g.,][Chapter 1]{boyd2004convex}. Non-convex penalties include the smoothly clipped absolute deviation or SCAD \citep{fan2001variable} and the minimax concave penalty or MCP \citep{zhang2010nearly}. Recent computational advances in fitting models with non-convex penalties include \citet{breheny2011coordinate} and \citet{mazumder2012}. Both works use coordinate descent approaches to fit SCAD and MCP and provide conditions for convergence.  Alternatively, an overview of proximal algorithms for non-convex optimization is given by \citet{parikh2014proximal} and \citet{polson2015proximal}. Recent works have also demonstrated the equivalence between fitting a model with MCP penalty and evaluating the posterior mode in a Bayesian hierarchical model under a suitable prior \citep{strawderman2013hierarchical, schifano2010majorization}. Following along these lines, we show that evaluating the posterior mode under a suitable approximation to the horseshoe prior of \citet{carvalho2009handling, carvalho2010horseshoe} solves a non-convex optimization problem with desirable theoretical properties and derive fast computational algorithms. 

\section{The Horseshoe Prior and Penalty}\label{sec:pen}
Many penalized optimization problems in statistics take the form 
\begin{eqnarray}
\argmin_{\theta\in \mathbb{R}^n} \{l(\theta; y) + \pi(\theta)\},\label{eq:pen}
\end{eqnarray}
where $l(\theta; y) $ is a measure of fit of parameter $\theta$ to data $y$ (also known as the empirical risk), and $\pi(\theta)$ is a penalty function. Let $p(y\mid \theta) \propto \exp\{-l(\theta; y)\}$ and $p(\theta) \propto \exp\{-\pi(\theta)\}$, where $p$ denotes a generic density. If $l(\theta;y)$ is proportional to the negative of the log likelihood function under a suitable model, one arrives at a Bayesian interpretation to the optimization problem: finding the mode of the posterior density $p(\theta \mid y )$ under prior density $p(\theta)$ \citep{polson2015mixtures}. The properties of the penalty are then induced by those of the prior. The horseshoe prior \citep{carvalho2010horseshoe}
is defined as global-local Gaussian scale mixture under a half-Cauchy prior, with density
\begin{eqnarray}
p_{HS}(\theta_i \mid \tau) = \int_0^\infty \frac{1}{u_i\tau} \phi \left ( \frac{\theta_i }{u_i\tau} \right ) \frac{2}{\pi (1+u_i^2)} d u_i, \label{eq:horsedens}
\end{eqnarray}
where $\tau >0$ and $\phi(\cdot)$ denotes the standard normal density. Equivalently,
$$
 \theta_i \mid u_i , \tau  \stackrel{ind}\sim \Nor ( 0 , u_i^2 \tau^2 ), \quad \; u_i \mid \tau \stackrel{ind}\sim C^+ (0, 1), \quad \tau>0,
$$
where $C^{+}$ denotes a half-Cauchy random variable. The $ u_i $ are local shrinkage parameters which shrink irrelevant signals to zero while keeping the magnitude of true signals. The parameter $ \tau$ is a global shrinkage parameter which gauges the level of sparsity. Several optimality results are available when the posterior mean under the horseshoe prior is used as an estimator, such as minimax optimality in estimation under $\ell_2$ loss \citep{van2014horseshoe} and asymptotic optimality in testing under 0--1 loss \citep{datta2013asymptotic}. However, little is known of the properties of the posterior mode under the horseshoe prior, which amounts to a solution of (\ref{eq:pen}) under the horseshoe penalty using a squared error empirical risk, given by $\sum_{i=1}^{n} (\theta_i - y_i)^2$.

\begin{figure*}[!t]
   	\centering
	\includegraphics[width=\textwidth, height=5cm]{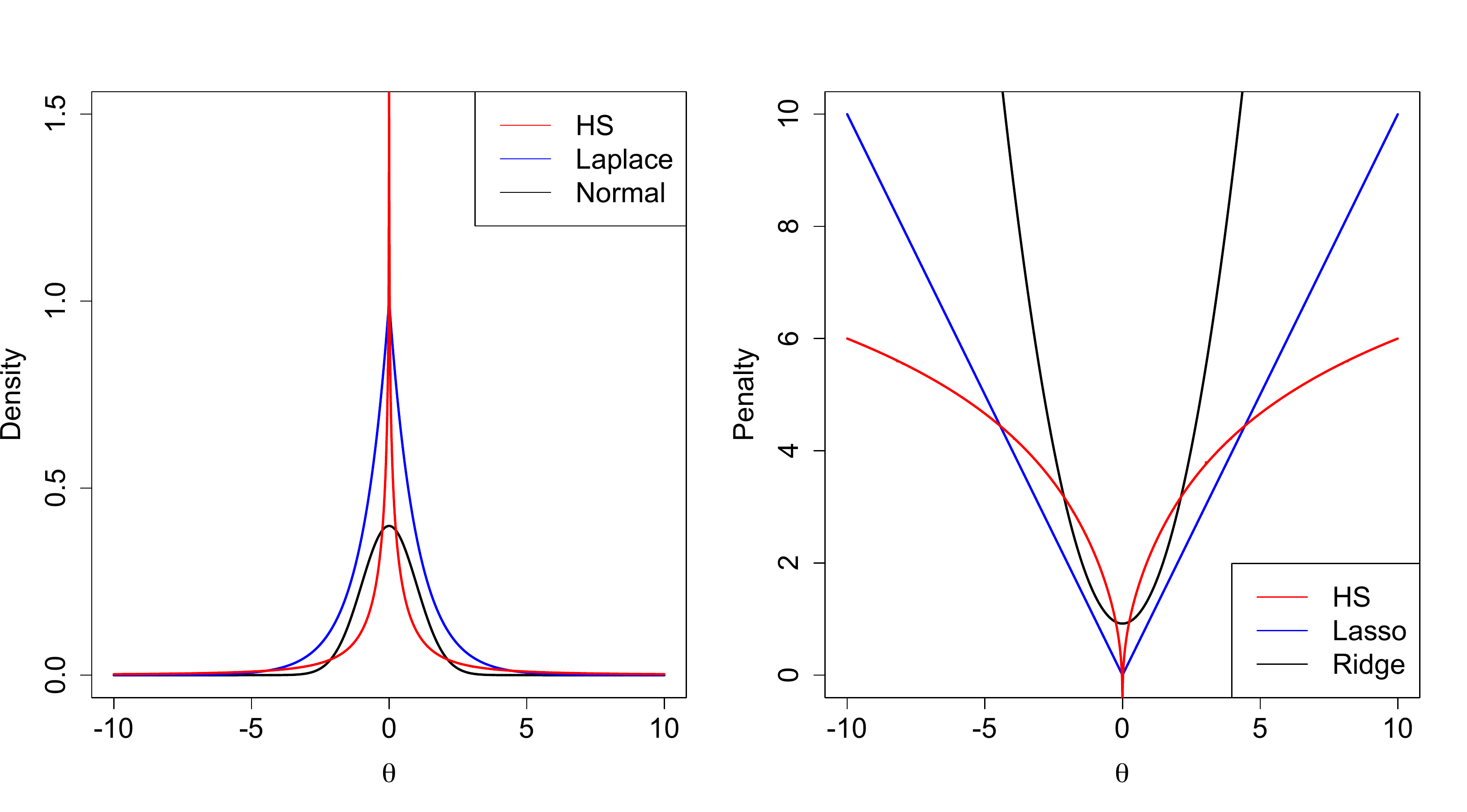}
   	\caption{Some densities and penalties given by the negative of their logarithms. Left panel: the horseshoe (HS) with $\tau=1$, the standard Laplace and standard normal densities. Right panel: the corresponding non-convex horseshoe and the convex lasso and ridge penalties.}
   	\label{fig:penalties}
   \end{figure*}
   
\citet{carvalho2010horseshoe} show that the horseshoe prior density admits tight upper and lower bounds
\begin{eqnarray}
\frac{\log \left ( 1 + \frac{4\tau^2}{\theta_i^2} \right )}{\tau {(2 \pi)^{3/2}}}< p_{ HS} (\theta_i \mid \tau) <\frac{2 \log \left ( 1 + \frac{2\tau^2}{\theta_i^2} \right )}{\tau {(2 \pi)^{3/2}}}, \label{eq:bound}
\end{eqnarray}
for $\theta_i  \in \mathbb{R},\; \tau>0$ and prove $\lim_{|\theta_i|\to0} p(\theta_i) = \infty$ and $\lim_{|\theta_i|\to \infty} p(\theta_i) \sim (\theta_i)^{-2}$ for any fixed $\tau$. The corresponding penalty is $\pi_{HS}(\theta\mid \tau) = \sum_{i=1}^{n} \pi_{HS}(\theta_i \mid \tau)$, where,
\begin{eqnarray}
\pi_{HS}(\theta_i \mid \tau) = -\log p_{HS}(\theta_i\mid \tau) = - \log \log \left ( 1 + \frac{2\tau^2}{\theta_i^2} \right ),\label{eq:horsepen}
\end{eqnarray}
up to terms involving $\theta_i$. Since the horseshoe prior density has tails decaying as $\theta_i^{-2}$, the corresponding penalty behaves as the logarithmic penalty for large values of $|\theta_i|$ and can be seen to be non-convex. Figure~\ref{fig:penalties} right panel shows the non-convex horseshoe penalty, in combination with convex lasso and ridge penalties. They are respectively obtained by taking the negative of the logarithm of the horseshoe, the Laplace and normal densities, shown on the left panel. It can be seen that the horseshoe penalty is more aggressive near zero compared to the convex penalties, encouraging sparsity. In fact, both the density and penalty are unbounded for the horseshoe at zero, suggesting a global solution to the optimization problem in (\ref{eq:pen}) that is identically equal to zero. However, this does not preclude the possibility of other local solutions, and in fact encourages one to look for local optimization algorithms that might lead to solutions that are more interesting than all zeros. For values far away from zero, the horseshoe penalizes lightly, a fact that can also be attributed to the heavy tails of the local $u_i$ terms. This suggests the horseshoe penalty bridges the gap between $\ell_0$ and $\ell_1$ penalties, a property also shared by other non-convex penalties such as SCAD or MCP.

\subsection{Properties of the Horseshoe Penalty}

\citet{fan2001variable} describe the desirable properties for a penalty function and list conditions for checking whether those properties hold. These are as follows:
\begin{enumerate}
\item \emph{(Near) Unbiasedness.} The resultant estimator is (nearly) unbiased when the true parameter is large. A sufficient condition is that the penalty satisfies $\pi'(|\theta|) =0$ when $|\theta|\to \infty$ where $\pi'$ is the first derivative of the penalty $\pi$.
\item \emph{Sparsity.} The resultant estimator is sparse. A sufficient condition is that $\inf_{\theta}\{|\theta| + \pi'(|\theta|)\}>0$.
\item \emph{Continuity.} The resultant estimator is continuous in the data $y$ to encourage stability in prediction. A  necessary and sufficient condition is that $\argmin_{\theta}\{|\theta| + \pi'(|\theta|)\}=0$.
\end{enumerate}
Property (3) is violated by hard thresholding rules, whereas Property (1) is violated by the lasso and associated soft thresholding rules and also by all penalties of the form $\ell_\gamma$ for $\gamma>1$. Penalties that satisfy Properties (1)--(3) include MCP and SCAD, however the computational algorithms used to fit models employing these penalties are quite challenging and can suffer from numerical issues. We first show that the horseshoe penalty enjoys Properties (1)--(2) above, arguing for its theoretical advantage; before preceding to develop efficient computational algorithms.
\begin{proposition}
The horseshoe posterior mode, defined as $\argmin_{\theta} \{(\theta - y)^2/2 + \pi_{HS}(\theta)\}$, where $\pi_{HS} (\theta)$ denotes the horseshoe penalty, satisfies Properties (1)--(2) above, but not Property (3).
\end{proposition}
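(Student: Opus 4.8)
The plan is to reduce everything to a single coordinate. The penalty $\pi_{HS}(\theta)=\sum_{i=1}^n\pi_{HS}(\theta_i\mid\tau)$ is separable and each summand is an even function of $\theta_i$, so the optimization in (\ref{eq:pen}) with squared-error risk decouples into $n$ identical scalar problems, and it is enough to verify Properties (1)--(3) for the scalar penalty $\pi_{HS}(\theta)=-\log\log(1+2\tau^2/\theta^2)$ of (\ref{eq:horsepen}) on $\theta>0$. First I would differentiate, obtaining
\[
\pi_{HS}'(\theta)=\frac{4\tau^2}{\theta\,(\theta^2+2\tau^2)\,\log\!\left(1+2\tau^2/\theta^2\right)},
\]
which is strictly positive on $(0,\infty)$; this derivative is the only object the three conditions involve.

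For Property (1), I would let $\theta\to\infty$ and use $\log(1+2\tau^2/\theta^2)\sim 2\tau^2/\theta^2$, so that $\pi_{HS}'(\theta)\sim 2/\theta\to0$, which is precisely the stated sufficient condition for (near) unbiasedness at large signals. For Property (2), I would let $\theta\to0^+$: there $\theta^2+2\tau^2\to2\tau^2$ and $\log(1+2\tau^2/\theta^2)\sim 2\log(1/\theta)$, hence $\pi_{HS}'(\theta)\sim 1/(\theta\log(1/\theta))\to\infty$, so $\theta+\pi_{HS}'(\theta)\to\infty$ as $\theta\to0^+$ and, by Property (1), also as $\theta\to\infty$. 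To turn this into $\inf_{\theta>0}\{\theta+\pi_{HS}'(\theta)\}>0$ I would argue by contradiction: any sequence $\theta_k$ with $\theta_k+\pi_{HS}'(\theta_k)\to0$ would force $\theta_k\to0$ and therefore $\pi_{HS}'(\theta_k)\to\infty$, a contradiction; hence the infimum is strictly positive and the estimator is sparse.

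Property (3) then fails for the \emph{same} reason the infimum in (2) is positive: since $\theta\mapsto\theta+\pi_{HS}'(\theta)$ is continuous and positive on $(0,\infty)$ and tends to $+\infty$ at both endpoints, it attains its global minimum at some interior point $\theta^*>0$, so $\argmin_{\theta}\{|\theta|+\pi_{HS}'(|\theta|)\}\ne0$, violating the necessary-and-sufficient condition for continuity in $y$. The step I expect to require the most care is not a hard estimate but the bookkeeping around the singularity at the origin: one must observe that only $\pi_{HS}'$ enters the Fan--Li conditions (the penalty $\pi_{HS}$ itself being unbounded below at $0$, consistent with the density bound (\ref{eq:bound})), that the additive constant dropped in passing to (\ref{eq:horsepen}) leaves $\pi_{HS}'$ unchanged, and that the strict inequality in Property (2) genuinely relies on the $1/(\theta\log(1/\theta))$ blow-up rather than on the trivial bound $|\theta|+\pi_{HS}'(|\theta|)\ge|\theta|$; one should also record that the interior minimizer $\theta^*$ in the Property (3) argument exists (by compactness via the two $+\infty$ limits) even though its closed form is not needed.
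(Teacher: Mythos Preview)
Your argument is correct and, for Properties~(1) and~(3), essentially mirrors the paper's: compute $\pi'_{HS}$, send $|\theta|\to\infty$ for (1), and use the blow-up $\pi'_{HS}(|\theta|)\to\infty$ at the origin to rule out $\argmin=0$ for (3). The one genuine difference is in Property~(2). The paper, after recording the two $+\infty$ endpoint limits, verifies strict positivity of $|\theta|+\pi'_{HS}(|\theta|)$ on the interior by clearing denominators and invoking the elementary inequality $x<(1+x)\log(1+x)$ for $x>0$ (Abramowitz--Stegun 4.1.33) with $x=2\tau^2/\theta^2$. You instead use a soft compactness/contradiction argument: both summands are nonnegative on $(0,\infty)$, so a minimizing sequence driving the sum to $0$ would force $\theta_k\to0$ and hence $\pi'_{HS}(\theta_k)\to\infty$. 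Your route is arguably cleaner---interior positivity is in fact immediate since both $|\theta|$ and $\pi'_{HS}(|\theta|)$ are already strictly positive there, so the Abramowitz--Stegun step is not strictly needed---and it dovetails with your Property~(3) argument, where the same interior minimizer $\theta^*>0$ witnesses $\argmin\ne0$. The paper's route, by contrast, yields an explicit quantitative lower bound, which could be useful if one later wanted to locate the threshold in $y$.
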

\begin{proof}
It is enough to check the properties for a single coordinate $\theta_i$. First, note from (\ref{eq:horsepen}) that
$$
\pi'_{HS}(|\theta_i|) = \frac{{4\tau^2/|\theta_i|^3}}{\left(1+{2\tau^2/\theta_i^2}\right)\log \left(1+{2\tau^2}/{\theta_i^2}\right) },
$$
and Property (1) follows. Next, for Property (2),
\begin{eqnarray*}
|\theta_i| + \pi'_{HS}(|\theta_i|) &=& |\theta_i| +\frac{{4\tau^2/|\theta_i|^3}}{\left(1+{2\tau^2/\theta_i^2}\right)\log \left(1+{2\tau^2}/{\theta_i^2}\right) }.
\end{eqnarray*}
Thus, $|\theta_i| + \pi'_{HS}(|\theta_i|)\to\infty$ as $|\theta_i| \to 0, \infty$ for any given $\tau>0$. For $|\theta_i| \ne 0, \infty$, the denominator of the second term is strictly positive. Thus, we need to check
$$
 \theta_i^4 {\left(1+{2\tau^2}/{\theta_i^2}\right)\log \left(1+{2\tau^2}/{\theta_i^2}\right) } + 4\tau^2>0.
$$
Result 4.1.33 of \citet{abram65} gives
$$
x < (1+x) \log(1+x), \quad x>-1, x\ne 0.
$$
Using $x=2\tau^2/\theta_i^2$ yields
\begin{eqnarray*}
 \theta_i^4 {\left(1+{2\tau^2}/{\theta_i^2}\right)\log \left(1+{2\tau^2}/{\theta_i^2}\right) } + 4\tau^2 > 2\tau^2 \theta_i^2 + 4\tau^2,
\end{eqnarray*}
which is strictly positive for any $\tau>0$, proving Property (2) holds. Since $\lim_{|\theta_i|\to 0} \{|\theta_i| + \pi'_{HS}(|\theta_i|)\} = \infty$, Property (3) fails to hold.
\end{proof}
An implication of this result is that the resultant estimator is sparse and is nearly unbiased in estimating large signals. However, the lack of continuity means the estimator suffers from some of the same issues as hard thresholding. We verify the hard thresholding-like behavior of the estimator via simulations and show that if the posterior mean is used as an estimator rather than the posterior mode, then it solves the continuity problem and usually results in an estimator with better squared error loss. However, the posterior mean does not result in a sparse solution and hence, is not suitable for subset selection. 

\section{The Horseshoe-Like Prior and Its Scale Mixture Representation}\label{sec:sm}
There is no closed form for the horseshoe density and numerical integration over $u_i$ in (\ref{eq:horsedens}) is required to evaluate the density at any given $\theta_i$. The tight upper and lower bounds in (\ref{eq:bound}) are also not densities. However, a proper prior density that mimics the behavior of the horseshoe density with a pole at the origin and polynomial tails is given by
\begin{eqnarray}
p_{\widetilde{HS}} (\theta_i \mid a) = \frac{1}{2 \pi{a^{1/2}}}\log \left ( 1 + \frac{a}{\theta_i^2} \right ), \label{eq:ptilde}
\end{eqnarray}
for $\theta_i  \in \mathbb{R},\; a>0$. We call this the horseshoe-like prior. Setting $a=2\tau^2$ and $a=4\tau^2$ in (\ref{eq:ptilde}) one recovers the bounds in (\ref{eq:bound}) that differ only by a constant factor. Since the bounds in (\ref{eq:bound}) are tight in $\theta_i$, and a constant multiplicative factor of the density (or, equivalently, a constant additive term to the penalty) has no bearings on the solutions to the optimization problem, one can use $\pi_{\widetilde{HS}}(\theta_i) = -\log( p_{\widetilde{HS}}(\theta_i))$ as a useful surrogate of the horseshoe penalty. The chief advantage of using a proper density is that it enables one to use the technique of latent variables to solve the optimization problem, such as the EM algorithm or the techniques based on data augmentation \citep{tanner1987calculation}, provided one can find a suitable probabilistic representation. 

The methodology developed in the remainder of this article relies on the following key result. For a real valued function $f(\cdot)$, the Frullani integral identity \citep[][pp.~406--407]{Jeffreys:MMP} gives
$$
\int_{0}^{\infty} \frac{f(cx) - f(dx)}{x} dx = \{f(0) - f(\infty)\}\log(d/c),
$$
for $c>0, d>0$. Using $f(x) = \exp(-x)$ yields a latent variable representation for the global-local scale mixture for $p_{\widetilde{HS}} (\theta_i \mid a)$ in (\ref{eq:ptilde}) as:
\begin{eqnarray*}
\frac{1}{2\pi a^{1/2}} \log\left(1 + \frac{a}{\theta_i ^2}\right) &=& \int_0^\infty \exp\left({-\frac{u_i\theta_i^2}{a}}\right) \frac{(1-e^{-u_i})}{2\pi {a^{1/2}} u_i} du_i\\
&=& \int_0^\infty \left({\frac{u_i}{a\pi}}\right)^{1/2} \exp\left({-\frac{u_i\theta_i^2}{a}}\right) \frac{(1-e^{-u_i})}{2{\pi^{1/2}} u_i^{3/2}} du_i, \quad a>0.
\end{eqnarray*}
or equivalently,
\begin{equation}
(\theta_i \mid u_i, a) \stackrel{ind}\sim \Nor \left(0, \frac{a}{2u_i}\right),\; p(u_i)  =  \frac{1-e^{-u_i}}{2\pi^{1/2}u_i^{3/2}} \label{eq:u_i},
\end{equation}
for $ 0< u_i < \infty, \; a>0$. Once again, the $u_i$ terms act as local scale parameters and the global term $a$ controls the overall level of sparsity. A useful outcome of this probabilistic representation is that the $u_i$ terms can be viewed as latent variables and thus suggests the possibility of EM and MCMC schemes, provided the appropriate quantities in the posterior can be easily computed. 

\subsection{Alternative Scale Mixtures and the Marginal Density Under the Horseshoe-Like Prior}

The horseshoe-like prior density on $\theta_i$, given by \eqref{eq:ptilde}, can be represented as a scale mixture of both Cauchy and Laplace densities on $\theta_i$, as the following two lemmas show.

\begin{lemma}\label{lemma:cauchy}
For a fixed $\tau$, the horseshoe-like prior can be written as a uniform scale mixture of a Cauchy prior on $\theta_i$, i.e. $\theta_i \mid \lambda_i, \tau \sim \CauchyRV(0,\lambda_i \tau)$ and $\lambda_i \sim \UnifRV(0,1)$. 
\beq
p_{\widetilde{HS}} (\theta_i \mid \tau^2) = \frac{1}{2 \pi \tau}\log \left ( 1 + \frac{\tau^2}{\theta_i^2} \right ) = \frac{1}{\pi}  \int_{0}^{1} \frac{\lambda_i \tau}{\lambda_i^2 \tau^2 + \theta_i^2} d\lambda_i. \label{eq:cauchy}
\eeq
\end{lemma}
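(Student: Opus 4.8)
The plan is a direct verification by evaluating the mixing integral in closed form. First I would recall that a Cauchy variable $\theta_i \mid \lambda_i,\tau \sim \CauchyRV(0,\lambda_i\tau)$ has density $\frac{1}{\pi}\,\frac{\lambda_i\tau}{\lambda_i^2\tau^2+\theta_i^2}$, and that mixing over $\lambda_i \sim \UnifRV(0,1)$, whose density is identically one on $(0,1)$, produces the marginal density $p(\theta_i\mid\tau)=\frac{1}{\pi}\int_0^1 \frac{\lambda_i\tau}{\lambda_i^2\tau^2+\theta_i^2}\,d\lambda_i$, which is precisely the right-hand side of \eqref{eq:cauchy}. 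So the lemma reduces to showing that this integral equals $\frac{1}{2\pi\tau}\log(1+\tau^2/\theta_i^2)$.

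Next I would evaluate the integral. Up to the constant $\tau$ the integrand is $g'(\lambda_i)/\bigl(2 g(\lambda_i)\bigr)$ with $g(\lambda_i)=\lambda_i^2\tau^2+\theta_i^2$; equivalently, substitute $v=\lambda_i^2\tau^2+\theta_i^2$, so that $dv=2\tau^2\lambda_i\,d\lambda_i$ and $\lambda_i\tau\,d\lambda_i = dv/(2\tau)$, with $v$ running from $\theta_i^2$ to $\tau^2+\theta_i^2$. Then $\int_0^1 \frac{\lambda_i\tau}{\lambda_i^2\tau^2+\theta_i^2}\,d\lambda_i = \frac{1}{2\tau}\int_{\theta_i^2}^{\tau^2+\theta_i^2}\frac{dv}{v} = \frac{1}{2\tau}\log\!\left(\frac{\tau^2+\theta_i^2}{\theta_i^2}\right)=\frac{1}{2\tau}\log\!\left(1+\frac{\tau^2}{\theta_i^2}\right)$ for any $\theta_i\neq 0$. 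Multiplying by $1/\pi$ yields the stated form.

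Finally I would identify the result with the horseshoe-like prior: putting $a=\tau^2$ in the definition \eqref{eq:ptilde} gives $p_{\widetilde{HS}}(\theta_i\mid\tau^2)=\frac{1}{2\pi\tau}\log(1+\tau^2/\theta_i^2)$, which coincides exactly with the evaluated mixture integral, establishing the claimed uniform-scale mixture-of-Cauchy representation. There is no substantive obstacle here; the only care needed is bookkeeping of the $1/\pi$ arising from the Cauchy normalization and the $1/(2\tau)$ from the substitution, together with the observation that the identity holds pointwise in $\theta_i\neq 0$, so that both sides agree as densities in $\theta_i$. Alternatively one can bypass the substitution entirely by noting $\partial_{\lambda_i}\log(\lambda_i^2\tau^2+\theta_i^2)=2\lambda_i\tau^2/(\lambda_i^2\tau^2+\theta_i^2)$ and integrating the antiderivative directly.
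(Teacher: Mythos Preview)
Your proposal is correct and is exactly the elementary computation the paper has in mind; in fact the paper omits the proof entirely, stating only that it is elementary. Your substitution $v=\lambda_i^2\tau^2+\theta_i^2$ (or equivalently recognizing the logarithmic antiderivative) is the natural one-line verification, and your bookkeeping of the Cauchy normalization and the identification with \eqref{eq:ptilde} at $a=\tau^2$ is accurate.
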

The proof is elementary and therefore omitted. The Cauchy scale mixture representation provides a natural adaptive sparsity model for the horseshoe-like prior. The horseshoe-like prior can be also expressed as a mixture of Laplace densities on $\theta_i$ due to a result by \cite{steutel2003infinite}. 

\begin{lemma}\label{lemma:laplace}
The horseshoe-like prior density in \eqref{eq:ptilde} can be written as a scale mixture of a double exponential or Laplace prior on $\theta_i$, as given below: 
\begin{align}
p_{\widetilde{HS}} (\theta_i \mid \tau^2)= \frac{1}{2 \pi \tau} \log \left(1 + \frac{\tau^2}{\theta_i^2} \right) & = \frac{1}{2\tau} \intpos \lambda_i \exp\{ -\lambda_i \abs{\theta_i}/\tau \} h(\lambda_i) d\lambda_i \nonumber \\
\text{where } \; h(\lambda) & = \frac{2}{\pi} \left( \frac{1 - \cos(\lambda)}{\lambda^2} \right) = \frac{1}{2\pi} \left( \frac{\sin(\lambda/2)}{\lambda/2} \right)^2, \; 0 \le \lambda < \infty. \label{eq:fejer}
\end{align}
Here the mixing density on $\lambda_i$ is a special type of density arising from Polya characteristic functions, called the Fejer-de la Vallee Poussin (or FVP) density \citep[Theorem 6.9]{devroye1986}.
\end{lemma}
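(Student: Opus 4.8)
The plan is to collapse the claimed mixture identity to a single Laplace–transform evaluation and then compute that transform by an elementary interchange of the order of integration. First I would record that $h$ is a genuine probability density: with the convention $h(\lambda)=\tfrac2\pi(1-\cos\lambda)/\lambda^2$, the classical integral $\intpos (1-\cos\lambda)/\lambda^2\, d\lambda = \pi/2$ shows the constant $2/\pi$ is exactly the right normalizer. Since both sides of \eqref{eq:fejer} are even in $\theta_i$, it suffices to treat $\theta_i>0$, and after cancelling the common factor $1/(2\tau)$ the lemma is equivalent to
\begin{equation*}
\intpos \lambda\, e^{-\lambda\theta_i/\tau}\, h(\lambda)\, d\lambda = \frac{2}{\pi}\intpos \frac{1-\cos\lambda}{\lambda}\, e^{-\lambda\theta_i/\tau}\, d\lambda = \frac{1}{\pi}\log\!\left(1+\frac{\tau^2}{\theta_i^2}\right).
\end{equation*}
Writing $t = \theta_i/\tau>0$, everything reduces to the identity $\intpos (1-\cos\lambda)\lambda^{-1} e^{-\lambda t}\, d\lambda = \tfrac12\log(1+t^{-2})$.

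To prove that identity I would use the elementary representation $(1-\cos\lambda)/\lambda = \intunit \sin(\lambda s)\, ds$ and swap the two integrations. The swap is justified by Tonelli/Fubini, since $\intunit \intpos e^{-\lambda t}\abs{\sin(\lambda s)}\, d\lambda\, ds \le \intunit \intpos e^{-\lambda t}\, d\lambda\, ds = 1/t < \infty$ for $t>0$. The inner integral is the standard Laplace transform $\intpos e^{-\lambda t}\sin(\lambda s)\, d\lambda = s/(t^2+s^2)$, and the remaining one-dimensional integral is $\intunit s/(t^2+s^2)\, ds = \tfrac12\log\big((t^2+1)/t^2\big) = \tfrac12\log(1+t^{-2})$. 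Substituting back $t=\theta_i/\tau$, so that $t^{-2}=\tau^2/\theta_i^2$, and reinstating the constants reproduces \eqref{eq:fejer}; the pole at $\theta_i=0$ is consistent with the representation because $\intpos \lambda h(\lambda)\, d\lambda = \tfrac2\pi\intpos (1-\cos\lambda)\lambda^{-1} d\lambda = \infty$.

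There is no deep obstacle here: the two places calling for a word of care are the justification of Fubini's theorem (covered by the displayed bound, valid for every $\theta_i\ne 0$) and careful bookkeeping of the normalizing constants so that the mixing density integrates to one. An alternative route, more in the spirit of the cited result of \citet{steutel2003infinite}, is to note that $t\mapsto\log(1+t^{-2})$ is completely monotone on $(0,\infty)$ — indeed it equals $2\intpos (1-\cos\lambda)\lambda^{-1}e^{-\lambda t}\, d\lambda$ with a nonnegative integrand — so that it is a mixture of exponentials, and then to identify the mixing density as $h$ via Laplace inversion; but the direct computation above is shorter and self-contained, so that is the version I would write out.
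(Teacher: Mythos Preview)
Your argument is correct. The reduction to the single identity $\intpos(1-\cos\lambda)\lambda^{-1}e^{-\lambda t}\,d\lambda=\tfrac12\log(1+t^{-2})$ is accurate, the substitution $(1-\cos\lambda)/\lambda=\intunit\sin(\lambda s)\,ds$ is valid, the Fubini bound works for every $t>0$, and the two resulting one–dimensional integrals are standard. The constant bookkeeping also checks out against the first expression for $h$.

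As for comparison with the paper: there is nothing to compare. The paper does not prove Lemma~\ref{lemma:laplace}; it simply announces the representation as ``due to a result by \citet{steutel2003infinite}'' and moves on. Your alternative route via complete monotonicity is closer in spirit to what that citation presumably supplies, but the direct computation you outline is shorter, self-contained, and strictly more informative than what appears in the paper. Write it up as is.
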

A useful outcome of Lemma \ref{lemma:cauchy} is the following result for the marginal density on $y_i$ under an $\text{Inverse-Gamma}(1/2, 1/2)$ prior on $\sigma^2$. 
We can use a Cauchy convolution result \citep{bhadra2016global} to prove the following: 
\begin{proposition}\label{prop:marginal}
Let the observations $(y_i \mid \theta_i, \sigma^2) \sim \NormRV(\theta_i, \sigma^2)$ and $\sigma^2 \sim \text{Inverse-Gamma}(1/2, 1/2)$, where the $\theta_i$'s are given the horseshoe-like prior in \eqref{eq:ptilde}, i.e. $p(\theta_i \mid \tau) =  \frac{1}{2 \pi \tau}\log \left( 1 + \frac{\tau^2}{\theta_i^2} \right)$. Then the marginal density of $y_i$ is given by: 
\begin{align}
m(y_i \mid \tau) & = \frac{1}{2\pi\tau} \log \left(1 + \frac{\tau^2}{1+y_i^2} \right). \label{eq:marginal}
\end{align}
\end{proposition}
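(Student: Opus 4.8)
The plan is to collapse the three-layer hierarchy to a single convolution of Cauchy densities.

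First I would integrate out $\sigma^2$. Writing $\varepsilon_i = y_i - \theta_i$, we have $\varepsilon_i \mid \sigma^2 \sim \NormRV(0,\sigma^2)$ with $\sigma^2 \sim \text{Inverse-Gamma}(1/2,1/2)$, which is exactly the normal variance mixture that yields a standard Cauchy: substituting $t = 1/\sigma^2$ in the defining integral gives marginal density $\{\pi(1+\varepsilon_i^2)\}^{-1}$. Hence $y_i \mid \theta_i \sim \CauchyRV(\theta_i, 1)$; equivalently $y_i \stackrel{d}{=} \theta_i + \varepsilon_i$ with $\varepsilon_i \sim \CauchyRV(0,1)$ independent of $\theta_i$, so the task reduces to finding the law of the sum of a horseshoe-like variable and an independent standard Cauchy.

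Next I would invoke Lemma~\ref{lemma:cauchy}: $\theta_i \mid \lambda_i, \tau \sim \CauchyRV(0, \lambda_i \tau)$ with $\lambda_i \sim \UnifRV(0,1)$. Conditioning on $\lambda_i$ and using the Cauchy convolution result of \citet{bhadra2016global} (independent Cauchy scale parameters add), $y_i \mid \lambda_i, \tau \sim \CauchyRV(0, 1 + \lambda_i \tau)$. It then remains to integrate out $\lambda_i$:
\begin{equation*}
m(y_i \mid \tau) = \frac{1}{\pi}\intunit \frac{1 + \lambda_i \tau}{(1 + \lambda_i \tau)^2 + y_i^2}\, d\lambda_i ,
\end{equation*}
and the substitution $u = (1 + \lambda_i \tau)^2 + y_i^2$ turns this into $(2\pi\tau)^{-1}\int u^{-1}\, du$ between the limits $1 + y_i^2$ and $(1+\tau)^2 + y_i^2$, i.e.\ a logarithm, which one rearranges into the stated form \eqref{eq:marginal}. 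A natural cross-check on the constants at this point is that $m(\cdot \mid \tau)$ must integrate to one in $y_i$, using $\int_{\mathbb R} \log\{(a^2 + y^2)/(b^2 + y^2)\}\, dy = 2\pi(a-b)$.

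I expect the single load-bearing observation to be the first step — that the $\text{Inverse-Gamma}(1/2,1/2)$ prior on $\sigma^2$ makes the noise exactly standard Cauchy, so the computation degenerates to Cauchy-plus-(uniform mixture of Cauchy) rather than a genuinely two-dimensional integral; everything after that is mechanical. As an alternative to the closing $\lambda_i$-integral, one can convolve in Fourier space: by \eqref{eq:cauchy} the characteristic function of the horseshoe-like prior is $\intunit e^{-\lambda \tau |t|}\, d\lambda = (1 - e^{-\tau|t|})/(\tau|t|)$, and multiplying by the standard-Cauchy characteristic function $e^{-|t|}$ gives $\tau^{-1}\int_1^{1+\tau} e^{-s|t|}\, ds$, a $\UnifRV(1, 1+\tau)$ scale mixture of Cauchy characteristic functions; inverting termwise recovers the same density with no integration in $\theta_i$.
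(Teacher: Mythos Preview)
Your proof is correct and follows essentially the same route as the paper's own argument: integrate out $\sigma^2$ to reduce the likelihood to a standard Cauchy, invoke Lemma~\ref{lemma:cauchy} together with the Cauchy convolution identity of \citet{bhadra2016global} to get $y_i\mid\lambda_i,\tau\sim\CauchyRV(0,1+\lambda_i\tau)$, and then integrate out $\lambda_i$ via the same quadratic substitution. The normalization cross-check and the Fourier-space alternative you add are nice extras not in the paper, but the core argument is identical.
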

A proof is given in Appendix~\ref{app:marginal}. A consequence is that the marginal density $m(y_i\mid \tau)$ behaves similar to the prior density $p(\theta_i \mid \tau)$ for large values of $|y_i|$ and thus also displays heavy tails.

Implications of the Laplace scale mixture in Lemma~\ref{lemma:laplace} are discussed in Section~\ref{sec:lla}, where it is used to derive a local linear approximation (LLA) algorithm. 


\section{Computational Algorithm I: Algorithms for Feature Selection}\label{sec:em}
We derive fast computational algorithms for evaluating the maximum a-posteriori (MAP) estimate under the horseshoe-like prior. 
According to Sections~\ref{sec:pen} and~\ref{sec:sm}, the solution to the this problem is identical to that of the optimization problem in (\ref{eq:pen}) under the horseshoe penalty $\pi_{HS}(\theta)$, where the empirical risk measure is the squared error loss. The proposed technique uses the latent variable representation in (\ref{eq:u_i}) to derive an EM algorithm for MAP estimation. 

\subsection{EM for Subset Selection in Normal Means Model}\label{sec:seq}
First, consider the model: $(y_i \mid \theta_i) \stackrel{ind} \sim \Nor (0,1)$. From (\ref{eq:u_i}), the hierarchical model for $i=1,\ldots,n$ is
\begin{eqnarray*}
(y_i \mid \theta_i) \stackrel{ind}\sim \Nor (\theta_i, 1), \; (\theta_i \mid u_i, a) \stackrel{ind}\sim \Nor \left(0, \frac{a}{2u_i}\right),\; p(u_i)  =  \frac{1-e^{-u_i}}{2{\pi^{1/2}}u_i^{3/2}},  \; 0< u_i < \infty, \quad a>0.
\end{eqnarray*}
The complete data posterior is
\begin{eqnarray*}
p(\theta_i, u_i \mid y_i, a) &\propto& \exp\left\{{-\frac{(y_i-\theta_i)^2}{2}}\right\} \exp\left({-\frac{u_i\theta_i^2}{a}}\right) \frac{(1-e^{-u_i})}{ u_i}.
\end{eqnarray*}
If one views the $u_i$ terms as latent variables, the E-step consists of computing their posterior expectations. It is given by $\tilde u_i = E(u_i \mid \theta_i, y_i, a)$, where,
\begin{eqnarray*}
\tilde u_i = \frac{1}{2\pi{a^{1/2}}}  \int_{0}^{\infty} u_i \exp\left({-\frac{u_i\theta_i^2}{a}}\right) \frac{(1-e^{-u_i})}{u_i} du_i =\frac{1}{2\pi{a^{1/2}}} \left(\frac{a}{\theta_i^2} - \frac{a}{\theta_i^2 +a}\right).
\end{eqnarray*}
The M-step maximizes the complete data posterior jointly in $(\theta, a)$ with the $u_i$ terms replaced by $\tilde u_i$. While the joint maximization does not have a closed-form solution, the conditional maximizations $(\theta \mid a)$ and $(a\mid \theta)$ are simple. The optimal $\theta_i$ for a given $a$ is simply the Gaussian posterior mode, 
$$
\hat \theta_i \mid a= \left(1 + \frac{2\tilde u_i}{a}\right)^{-1} y_i.
$$
Maximization of $a$ with given $\theta$ is easy due to the fact that 
$$
\theta_i \sqrt{2u_i} \mid a \sim \Nor (0,a).
$$
Thus,
\begin{eqnarray*}
\hat a \mid \theta = \frac{1}{n} \sum_{i=1}^{n} 2 \tilde u_i \theta_i^2 =\frac{a^{3/2}}{n\pi} \sum_{i=1}^{n} \frac{1}{\theta_i^2 +a}.
\end{eqnarray*}
Thus, the $(t+1)^{\mathrm{th}}$ expectation-maximization recursion for $t\ge 0$ is given by as a coordinate descent, or as expectation-conditional maximization \citep{meng1993maximum}, as
\begin{align*}
\hat a^{(t+1)} \mid \hat\theta_1^{(t)}, \ldots, \hat\theta_n^{(t)} &= \frac{\{\hat a^{(t)}\}^{3/2}}{n\pi} \sum_{i=1}^{n} \left( \frac{1}{\{\hat\theta_i^{(t)}\}^2 +\hat a^{(t)}}\right), \\
\hat \theta_i^{(t+1)} \mid \hat a^{(t+1)} &= y_i \left( 1 +  \frac{\{\hat a^{(t+1)}\}^{1/2}} {\pi \{\hat\theta_i^{(t)}\}^2 \left[\{\hat\theta_i^{(t)}\}^2 + \hat a^{(t+1)}\right]}\right)^{-1},
\end{align*}
for $i=1, \ldots, n$, which is repeated until convergence and $\hat\theta^{(0)}$ and $\hat a^{(0)}$ are suitable initial values. 
Since the penalty is unbounded at zero, the global solution to the optimization problem is given by $\hat\theta_i = 0$ for all $i$. However, since the EM is a local, deterministic algorithm, it converges once a local mode is identified. In fact, the existence of a global mode identically equal to zero provides arguments against using a global optimization algorithm, such as simulated annealing \citep{kirkpatrick1983optimization}. The convergence of the EM algorithm of course depends on the choice of starting values. However, the fact that there is no unique solution is a result of the non-convex penalty itself, rather than an artifact caused by a failure of the optimization algorithm. Local solutions can be compared by evaluating the likelihoods at the solutions, or by their squared error estimates. If the algorithm converges to the uninteresting all zero solution, it can be restarted with a different choice of starting values.

\subsection{EM for Subset Selection in High-Dimensional Regression}\label{sec:reg}

A similar computational algorithm is also applicable to feature  selection in high-dimensional regression. Consider the following regression model for $y \in \mathbb{R}^n, X\in \mathbb{R}^{n\times p}, \theta \in \mathbb{R}^p$ where $p>n$:
\begin{eqnarray*}
(y \mid X, \theta) \stackrel{ind}\sim \Nor (X\theta, 1), \; (\theta_i \mid u_i, a) \stackrel{ind}\sim \Nor \left(0, \frac{a}{2u_i}\right), \; p(u_i)  =  \frac{1-e^{-u_i}}{2{\pi^{1/2}}u_i^{3/2}},  \; 0< u_i < \infty, \quad a>0.
\end{eqnarray*}
The normal means model $(y_i \mid \theta_i)\stackrel{ind} \sim \Nor (\theta_i, 1)$ of Section~\ref{sec:seq} can be seen to be a special case of the regression model with $p=n$ and $X=I_n$, where $I_n$ is the identity matrix of size $n$. Since there is no change in the hierarchy compared to the normal means model for the latent $u_i$ terms, their conditional expectations remain unchanged. Similarly, the posterior mode  of $a$ has the same form as Section~\ref{sec:seq}. The only change is that the posterior mode for $\theta$ is now given by
$$
\hat \theta \mid a = \left\{X^T X + \mathrm{diag} \left(\frac{2 \tilde u_i}{a}\right)\right\}^{-1} X^T y.
$$
Consequently, the $(t+1)^{\mathrm{th}}$ EM iteration for $t\ge0$ is:
\begin{align*}
\hat a^{(t+1)} \mid \hat\theta_1^{(t)}, \ldots, \hat\theta_p^{(t)} &= \frac{\{\hat a^{(t)}\}^{3/2}}{p\pi} \sum_{i=1}^{p} \left( \frac{1}{\{\hat\theta_i^{(t)}\}^2 +\hat a^{(t)}}\right), \\
\hat \theta^{(t+1)} \mid \hat a^{(t+1)}  &= \left\{ X^{T}X +  \mathrm{diag} \left(\frac{2 \tilde u_i^{(t)}}{\hat a^{(t+1)}}\right)\right\}^{-1} X^T y,
\end{align*}
where, as in Section~\ref{sec:seq},
$$
\tilde u_i^{(t)} =\frac{1}{2\pi{a^{1/2}}} \left(\frac{a}{\theta_i^2} - \frac{a}{\theta_i^2 +a}\right),
$$
computed at $a = \hat a^{(t)}, \theta_i = \hat \theta_i^{(t)}$. The computationally limiting step is the calculation of the inverse of the $p \times p$ matrix of the form $(X^{T}X + D^{-1})^{-1}$ where $D^{-1}$ is a $p \times p$ positive definite diagonal matrix, which in our case is $\mathrm{diag} \left({2 \tilde u_i}/{a}\right)$. The naive computational complexity is $O(p^3)$. However, an application of the Woodbury matrix identity gives
$$
(X^TX+D^{-1})^{-1} = D - DX^T(XDX^T + I_n)^{-1} XD.
$$
This involves the computing the inverse of an $n \times n$ matrix, which is $O(n^3)$, and the computation of matrix products $DX^{T}$ and $X^TD$, which are $O(np^2)$. Thus, the resultant computational complexity is $O(np^2)$ when  $p > n$, which is an improvement over $O(p^3)$.

\subsection{One-step Estimator Using the LLA Algorithm}\label{sec:lla}
We now discuss the implications of the horseshoe-like prior as a Laplace scale mixture (see Lemma~\ref{lemma:laplace}), and show that it is useful for sparse parameter learning via the local linear approximation (LLA) algorithm of \cite{zou2008one} that improves upon the local quadratic approximation (LQA) of \cite{fan2001variable}. In particular, \cite{zou2008one} provided an EM algorithm and an optimal one-step estimator by using an inverse Laplace transform on the bridge penalty, which is equivalent to a Laplace mixture of a stable law. In general, any sparsity-inducing prior that admits a Laplace mixture representation falls into the LLA--LQA framework, a notable example being the generalized double Pareto prior \citep{armagan2011generalized}. 

\cite{hunter2005variable,fan2001variable} and \cite{zou2008one} discuss LQA and LLA algorithms. \cite{hunter2005variable} discuss the relationship of the LQA and minorize-majorize (MM) algorithms which are extensions of the EM algorithm. 
When penalties can be written as a cumulant transformation, equivalently a scale mixture of normals, these algorithms are exact. \cite{polson2015mixtures} discuss the duality between mixture and envelope representation from a Bayesian perspective of hierarchical modeling and present several useful conditions for such duality to hold. 

We discuss these strategies for the horseshoe-like prior after a brief description of the framework in the context of a penalized likelihood model. Specifically, consider the regularization problem
\[
Q( \theta ) = \argmax_{\theta\in\mathbb{R}^p} \left\{\sum_{i=1}^n l_i ( \theta ) - n \sum_{j=1}^p \pi_\tau \left( \abs{\theta_j} \right)\right\}, 
\]
where $l_i(\theta)$ is the log likelihood of the $i$th observation, $n$ is the number of observations, $p$ is the model space dimension and $\pi_\tau$ is the penalty applied to each coefficient, although in principle they could be component-specific. The LQA algorithm uses a quadratic Taylor approximation for $ \pi_\tau \left ( \abs{\theta_j} \right ) $ whereas the LLA algorithm \citep[Equation (2.6)]{zou2008one} uses
\[
 \pi_\tau \left ( | \theta_j | \right ) \approx \pi_\tau \left ( | \theta_j^{(0)} | \right )
 +  \pi_\tau^\prime \left ( | \theta_j^{(0)} | \right ) \left ( | \theta_j | -  | \theta_j^{(0)} | \right ), \; \text{for} \; \theta_j \approx \theta_j^{(0)}. 
\]
Hence, LLA leads to the following iterative algorithm that can be solved with the LARS algorithm \citep{efron2004least} for LASSO. Set the initial value $\theta_j^{(0)}$ to be the un-penalized maximum likelihood estimate. For each $k = 1,2, \ldots$, solve the iterative system of equations until convergence of the $\{ \theta^{(k)} \}$ sequence. 
\beq
\theta^{(k+1)} = \argmax_{\theta\in\mathbb{R}^p} \left\{\sum_{i=1}^n l_i ( \theta ) - n \sum_{j=1}^p \pi_\tau^\prime \left ( \abs{ \theta_j^{(k)} } \right ) \abs{ \theta_j }\right\}. \label{eq:lla}
\eeq
This scheme is called the LLA algorithm of \citet{zou2008one}, which has a unque advantage of producing sparse intermediate and final estimates $\theta^{(k)}$ unlike the LQA algorithm. \citet{zou2008one} also showed that the LLA algorithm can be recast as an EM algorithm under certain conditions. Suppose the exponentiated (negative) penalty function $\exp(-n \pi_\tau(\cdot))$ admits the following Laplace mixture representation: 
\beq
\exp(-n \pi_\tau(\abs{\theta_j})) = \intpos \frac{1}{2\omega_j} {\rm e}^{-\abs{\theta_j}/\omega_j} p(\omega_j) d \omega_j. \label{eq:lapmix}
\eeq
Then, maximizing $Q(\btheta)$ becomes equivalent to calculating the posterior mode of $p(\btheta \mid y)$ by treating $\exp(-n \pi_\tau(\abs{\theta_j}))$ as the prior on $\btheta$ after marginalizing the hyperparameters. This property holds true for the penalty induced by the horseshoe-like prior as it satisfies the Laplace mixture representation (\textit{vide} Lemma \ref{lemma:laplace}). For the general prior-penalty in \eqref{eq:lapmix}, the exact EM step for LLA algorithm is given by \citep[Equation (2.13)]{zou2008one}:
\[
\theta^{(k+1)} = \argmax_{\theta\in\mathbb{R}^p}\left[ \sum_{i=1}^n l_i ( \theta ) +  \sum_{j=1}^p \left\{ - \abs{\theta_j} ~ \E \left( \omega_j^{-1} \mid \theta^{(k)}_j, y \right) \right\}\right], \; k =1,2, \ldots. 
\]
As the posterior moment comes from a scale mixture, the expectation can be derived without an explicit knowledge of the mixing measure. A computationally efficient alternative to the aforesaid EM procedure is the one-step estimator $\hat{\theta}_{ose}$ proposed by \cite{zou2008one}, that automatically incorporates sparsity. For the linear regression model, taking $\theta^{(0)}$ to be the ordinary least squares estimator, we get: 
\beq
\theta_{\text{lin-reg}}^{(1)} = \argmin_{\theta\in\mathbb{R}^p} \left\{\half \vectornorm{y - X\theta}^2 + n \sum_{j=1}^p \pi_\tau^\prime \left ( | \theta_j^{(0)} | \right ) | \theta_j |\right\}, \nonumber
\eeq
and for a general likelihood model, assuming $\theta^{(0)} = \hat{\theta}$(mle), the corresponding one-step estimators are given as: 
\beq
\theta_{\text{log-lik}}^{(1)} = \argmin_{\theta\in\mathbb{R}^p} \left\{ \half (\theta - \theta^{(0)})^{\prime}[-\nabla^2 \ell(\theta^{(0)})] (\theta - \theta^{(0)}) + n \sum_{j=1}^p \pi_\tau^\prime \left ( | \theta_j^{(0)} | \right ) | \theta_j |\right\}, \nonumber
\eeq
For the horseshoe-like prior, $\pi_\tau^\prime \left ( | \theta_j^{(0)} | \right )$ is given as: 
\[
\pi_\tau^\prime \left ( | \theta_j | \right ) = \frac{{4\tau^2/|\theta_j|^3}}{\left(1+{2\tau^2/\theta_j^2}\right)\log \left(1+{2\tau^2}/{\theta_j^2}\right) },
\]
Hence, the one-step estimator for the horseshoe-like prior for the normal means problem can be written as: 
\beq
\theta^{(1)} = \argmin_{\theta\in\mathbb{R}^p} \left\{ \half \vectornorm{y - \theta}^2 + 4\tau^2 n \sum_{j=1}^p  \frac{\abs{\theta_j}}{ \abs{\theta^{(0)}}^3(1+\nicefrac{2\tau^2}{(\theta^{(0)}_j)^2})\log (1+\nicefrac{2\tau^2}{(\theta^{(0)}_j)^2}) } \right\}. \label{eq:ose}
\eeq
The one-step estimator in \eqref{eq:ose} has a superficial similarity with the adaptive LASSO \citep{zou2006adaptive} in that the weights of $\abs{\theta_j}$ are decreasing function of $\hat{\theta}(ols)$. The one-step estimator can be rapidly computed by exploiting the LARS algorithm \citep{efron2004least}.

\section{Computational Algorithm II: MCMC for Posterior Exploration}\label{sec:mc}

In addition to  fast EM and LLA algorithms for MAP estimates, one may wish to explore the entire posterior for a full Bayes solution and uncertainty quantification. The hierarchy for the horseshoe-like prior in \eqref{eq:u_i} can be reparameterized by taking $t_i^2 = 2u_i$ and $\tau^2=a$ to yield the following:
\begin{eqnarray}
(\theta_i \mid t_i, \tau) \sim \Nor \left(0, \frac{\tau^2}{t_i^2}\right), \; p(t_i)= \frac{(1-e^{-\half t_i^2})}{\sqrt{2\pi} t_i^{2}},\label{eq:hslike2}
\end{eqnarray}
where $t_i \in  \mathbb{R},\tau^2 > 0$ and the prior density $p(t_i)$ in \eqref{eq:hslike2} is known as the standard slash-normal ($SN(0,1)$) density, given by: 
\beq
p_{SN}(x) = \frac{\phi(0) - \phi(x)}{x^2} = \frac{1-e^{-\half x^2}}{\sqrt{2\pi}x^2}, \quad x \in \mathbb{R}\nonumber \label{eq:slashpdf},
\eeq
where $\phi(\cdot)$ is the density of a standard normal. 
The $SN(0,1)$ density can be also written as a normal variance mixture with a $\mbox{Pareto}(1/2)$ mixing density \citep{gneiting1997normal,barndorff1982normal}. The following result provides a scale mixture representation for the type II modulated normal density, which reduces to the slash-normal density for $b = 1/2$. 
\begin{proposition} \citep{gneiting1997normal}. \label{res:sn}
Suppose $p(x)$ is a scale mixture of normal with density
\[
p(x) = \int_{0}^{\infty} \frac{1}{(2\pi \nu)^{\half}}\exp \left(-\frac{x^2}{2\nu} \right)d F(\nu),
\]
where $F$ is a distribution function on $[0,\infty]$. The modulated normal distributions of type II arise when $F(\cdot)$ is a Pareto distribution on $[1,\infty)$ with parameter $b>0$. The Pareto distribution has density $b/\nu^{b+1}$ for $\nu > 1$, and the resulting normal scale mixture has density: 
\[
p(x) = \frac{b}{(2\pi)^{1/2}} \left(\frac{x^2}{2} \right)^{-(b+\half)}\gamma\left(b+\half,\frac{x^2}{2}\right).
\]
Here $\gamma(\alpha,x) = \int_{0}^{x} t^{\alpha-1} {\mathrm e}^t dt$ denotes the lower incomplete gamma function. 
\end{proposition}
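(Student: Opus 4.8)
The plan is to establish the identity by direct evaluation of the mixing integral, since once the Pareto density is inserted the claimed closed form is only a change of variables away. First I would substitute $dF(\nu) = b\,\nu^{-(b+1)}\,d\nu$ on $(1,\infty)$ into the definition of $p$ and collect powers of $\nu$ to get
\[
p(x) = \frac{b}{(2\pi)^{1/2}} \int_1^\infty \nu^{-(b+3/2)} \exp\left(-\frac{x^2}{2\nu}\right) d\nu .
\]
For $x \neq 0$ the natural substitution is $t = x^2/(2\nu)$, so that $\nu = x^2/(2t)$ and $|d\nu| = (x^2/2)\,t^{-2}\,dt$; the endpoints $\nu = 1$ and $\nu \to \infty$ map to $t = x^2/2$ and $t \to 0^{+}$ respectively.

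The second step is the bookkeeping. After the substitution, the factors coming from $\nu^{-(b+3/2)}$ and from the Jacobian combine as $(2/x^2)^{b+3/2}(x^2/2)\,t^{b-1/2} = (x^2/2)^{-(b+1/2)}\,t^{b-1/2}$, so that
\[
p(x) = \frac{b}{(2\pi)^{1/2}} \left(\frac{x^2}{2}\right)^{-(b+1/2)} \int_0^{x^2/2} t^{\,b-1/2}\, e^{-t}\, dt ,
\]
and the remaining integral is by definition the lower incomplete gamma function $\gamma(b+1/2,\, x^2/2)$, which is exactly the stated formula.

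Finally I would record two small sanity checks. The point $x = 0$ is handled by continuity: using $\gamma(\alpha, s) \sim s^{\alpha}/\alpha$ as $s \to 0^{+}$ with $\alpha = b + 1/2$, the product $(x^2/2)^{-(b+1/2)}\gamma(b+1/2, x^2/2) \to (b+1/2)^{-1}$, so $p(0) = b\,[(2\pi)^{1/2}(b+1/2)]^{-1}$ is finite, as it must be for a bounded density. One can also confirm that $p$ integrates to one, most cheaply by appealing to Tonelli in the original mixture representation, since the conditional Gaussians and the Pareto mixing law are all proper; and specializing to $b = 1/2$ recovers the standard slash-normal density $p_{SN}$ referred to in the text.

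The main, and essentially only, obstacle is clerical rather than analytic: one must track the exponents and the Jacobian carefully through the substitution so that the powers of $x^2/2$ collapse correctly. There is no interchange-of-integration subtlety, since nonnegativity of the integrands makes every step legitimate.
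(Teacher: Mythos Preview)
Your argument is correct. The paper does not actually prove this proposition; it is quoted as a known result from \citet{gneiting1997normal} and used without derivation, so there is no in-paper proof to compare against. Your direct computation---inserting the Pareto density, collecting the power $\nu^{-(b+3/2)}$, and reducing to the lower incomplete gamma via the substitution $t = x^2/(2\nu)$---is exactly the natural route and is carried out cleanly. The sanity checks at $x=0$ and the specialization to $b=1/2$ (slash-normal) are a nice touch. One minor remark: the statement as printed contains an obvious typo, writing $e^{t}$ in the definition of $\gamma(\alpha,x)$ where $e^{-t}$ is intended; your proof correctly uses $e^{-t}$.
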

\noindent Hence, the following lemma is immediate.
\begin{lemma}[Hierarchy for slash-normal]\label{lemma:nvm}
Slash-normal random variables can be generated as the $X = ZV^{\half}$, where $Z$ is a standard normal and $V$ follows a Pareto distribution on $[1,\infty)$ with parameter $1/2$. 
\end{lemma}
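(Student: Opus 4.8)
The plan is to obtain the lemma as a direct specialization of Proposition~\ref{res:sn} to $b = \half$, so that the only substantive step is to simplify the lower incomplete gamma function at integer order.

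First I would set $b = \half$ in the displayed density of Proposition~\ref{res:sn}. Then $b + \half = 1$, the prefactor $(x^2/2)^{-(b+\half)}$ becomes $2/x^2$, and the incomplete gamma term is $\gamma\!\left(1, x^2/2\right)$. Since $\gamma(1,z) = \int_0^z e^{-t}\,dt = 1 - e^{-z}$, evaluating at $z = x^2/2$ collapses the density to
\[
p(x) = \frac{1/2}{(2\pi)^{1/2}}\cdot\frac{2}{x^2}\bigl(1 - e^{-x^2/2}\bigr) = \frac{1 - e^{-x^2/2}}{\sqrt{2\pi}\,x^2} = p_{SN}(x),
\]
which identifies the $SN(0,1)$ density with the type II modulated normal at $b = \half$.

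Second, I would read the hierarchy off the mixture form already supplied in Proposition~\ref{res:sn}: the representation $p(x) = \int_0^\infty (2\pi\nu)^{-1/2}\exp(-x^2/(2\nu))\,dF(\nu)$ with $F$ a Pareto distribution on $[1,\infty)$ with parameter $b = \half$ is exactly the statement that $X \mid V = \nu \sim \NormRV(0,\nu)$ with $V$ distributed as $\mathrm{Pareto}(1/2)$ on $[1,\infty)$; equivalently $X \iidd Z V^{1/2}$ with $Z \sim \NormRV(0,1)$ independent of $V$. Combining this with the density computation of the previous step yields the claim.

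There is essentially no obstacle here; the only points requiring care are bookkeeping ones. The integrand in the definition of the lower incomplete gamma function in Proposition~\ref{res:sn} should be read as $t^{\alpha-1}e^{-t}$ (not $t^{\alpha-1}e^{t}$) for the stated closed form to be valid, and one must keep to the parametrization in which $V > 1$ has shape $\half$, rather than a shifted or Lomax-type variant, so that the support and normalization of $F$ match those used in the proposition.
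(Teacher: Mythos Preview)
Your proposal is correct and matches the paper's approach exactly: the paper states the lemma is immediate from Proposition~\ref{res:sn}, having already remarked that the type II modulated normal reduces to the slash-normal at $b=\tfrac12$, and you simply spell out that immediacy by evaluating $\gamma(1,x^2/2)=1-e^{-x^2/2}$ and reading off the normal scale-mixture hierarchy. Your side observation about the sign typo in the incomplete gamma integrand is also correct.
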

\noindent Thus, the final scale mixture representation for the horseshoe-like prior is: 
\begin{align}
(\theta_i \mid t_i, \tau) & \sim \Nor\left( 0, \frac{\tau^2}{ t_i^2} \right), (t_i)  \sim SN(0,1), \; t_i \in \mathbb{R}, \; \tau^2 > 0, \label{eq:hslike3} \\
\text{or, }\nonumber\\
(\theta_i \mid t_i, \tau) & \sim \Nor\left(0, \frac{\tau^2}{ t_i^2}\right), (t_i \mid s_i)  \sim \Nor \left(0, s_i \right),\; s_i \sim \mbox{Pareto}\left({1}/{2} \right), \; t_i \in\mathbb{R}, \; \tau^2 > 0. \label{eq:pareto}
\end{align}

\subsection{Complete Conditionals and an MCMC Sampler}\label{sec:mcmc}
We use the scale-mixture representation of $SN(0,1)$ mixing density from Result~\ref{res:sn}: 
\begin{multline}
\frac{(1- e^{-\half t_i^2})}{\sqrt{2\pi} t_i^2} =  \int_{1}^{\infty} \frac{1}{\sqrt{2\pi s_i}} \exp\left(- \frac{t_i^2}{2 s_i}\right) \frac{1}{2 s_i^{3/2}} d s_i = \int_{0}^{1} \frac{1}{2\sqrt{2\pi}} \exp\left(- \frac{\nu_i t_i^2}{2}\right) d \nu_i, \; \text{where} \; \nu_i = s_i^{-1}.\nonumber
\end{multline}
We need to either specify a prior on the hyper-parameter $\tau$ (full Bayes) or treat it as a tuning parameter (empirical Bayes). Since $\tau$ is a scale parameter for $p(\theta_i)$, one option is a $C^+(0,1)$ prior on $\tau$. We first present the steps in the MCMC scheme conditional on $\tau$, where full conditionals of the other parameters are in closed form and then discuss simulation of $\tau$, which requires a slice sampling step. Together, these steps constitute a Metropolis within Gibbs approach. Conditional on $\tau$, the joint density is:
\begin{eqnarray}
p(y, \theta, t, \nu \mid \tau) &\propto& \prod_{i=1}^{n} \exp\left\{-\frac{(y_i - \theta_i)^2}{2} \right\} \frac{|t_i|}{|\tau|} \exp\left(-\frac{t_i^2}{2\tau^2}\theta_i^2\right) \exp\left(- \frac{\nu_i t_i^2}{2}\right) \mathbf{1}\{0< \nu_i< 1\}. \label{eq:joint} \nonumber
\end{eqnarray}
The complete conditionals given $\tau$ for $i = 1, \ldots, n$ are:
\begin{align*}
(\theta_i \mid y_i, t_i, \nu_i, \tau) & \sim \Nor\left( \left(1+\frac{t_i^2}{\tau^2}\right)^{-1} y_i, \left(1+\frac{t_i^2}{\tau^2}\right)^{-1}  \right), \\
(t_i^2 \mid y_i, \theta_i, \nu_i , \tau) & \sim \mbox{Gamma}\left( \text{shape} = \frac{3}{2}, \mbox{rate} = \frac{\theta_i^2}{2\tau^2}+\frac{\nu_i}{2} \right), \\
(\nu_i \mid y_i, t_i, \theta_i , \tau) & \sim \mbox{Exponential} \left(\mbox{rate} = \frac{t_i^2}{2} \right) \mathbf{1}\{0< \nu_i < 1\}.
\end{align*}
Under the half Cauchy prior for $\tau$, $p(\tau) \propto (1+\tau^2)^{-1}$, the conditional distribution of $\eta = 1/\tau^2$ is given by:
\beq
p(\eta \mid y, \theta, t , \nu) \propto \frac{1}{1+\eta} \eta^{\frac{n-1}{2}} \exp\left( -\frac{\eta}{2} \sum_{i=1}^{n} t_i^2\theta_i^2 \right). \nonumber
\eeq
Thus, the slice sampling steps for sampling $\eta$ are: 
\ben
\item Sample $(u \mid \eta)$ uniformly on $[0,(1+\eta)^{-1}]$.
\item Sample $(\eta \mid u)\sim \mbox{Gamma}((n+1)/2, \sum_{i=1}^{n} t_i^2\theta_i^2/2)$, a Gamma density, truncated to have zero probability outside the interval $[0,(1-u)u^{-1}]$.
\een

\section{Simulation Study}

We performed simulation studies to compare feature selection performances with the normal means model of Section~\ref{sec:seq} and the linear regression model of Section~\ref{sec:reg}. 

\subsection{Normal Means Model} 
We take $n=1000$. In true $\theta$, components 1--10 are of magnitude 3, components 11--20 are of magnitude $-3$, followed by 980 zeros. Then we generate data as $(y_i \mid \theta_i) \stackrel{ind} \sim \Nor(\theta_i, 1)$ for $i=1,\ldots, n$. We compare the horseshoe posterior mode obtained by the proposed EM algorithm of Section~\ref{sec:seq}, posterior mean obtained from the MCMC algorithm of Section~\ref{sec:mcmc}, SCAD, MCP and lasso. The results are summarized in Figure~\ref{fig:seq} and Table~\ref{tab:seq}. The posterior mode correctly identifies 600 out of 880 zero components, which is the highest among all methods. It also identifies 19 of the 20 true non-zero features, indicating a good performance in subset selection. Since a method that performs well in subset selection can have poor $\ell_2$ estimation properties (e.g., hard-thresholding), we also compare the methods for the sum of squared errors (SSE), defined as $\sum_{i=1}^{n} (\hat \theta_i - \theta_i)^2$, where $\hat \theta_i$ is the estimate of $\theta_i$. The mode performs better than the two other non-convex penalties (SCAD and MCP). Lasso performs reasonably well in terms of SSE, but poorly in terms of detection of zeros and non-zeros, resulting in a denser solution. This behavior is well documented for convex penalties. The horseshoe posterior mean does not result in exact zero solutions. However, in terms of the SSE, it has the best performance among all methods. The reason for this can be seen from Figure~\ref{fig:seq}, second panel from left. The bias of the horseshoe posterior mean goes to zero for large signals, whereas for smaller signals, there is stronger shrinkage compared to the lasso, but a smooth shrinkage profile (unlike the mode, SCAD or MCP). Lasso leaves a small but constant bias in the estimates, due its soft thresholding behavior. Finally, in terms of computational time, the proposed EM algorithm is orders of magnitude faster than state of the art non-convex solvers such as the R package \texttt{ncvreg}, which implements both SCAD and MCP or \texttt{sparsenet}, which implements coordinate descent algorithm to fit MCP.

\begin{figure*}[!t]
   	\centering
	\includegraphics[width=\textwidth, height=6cm]{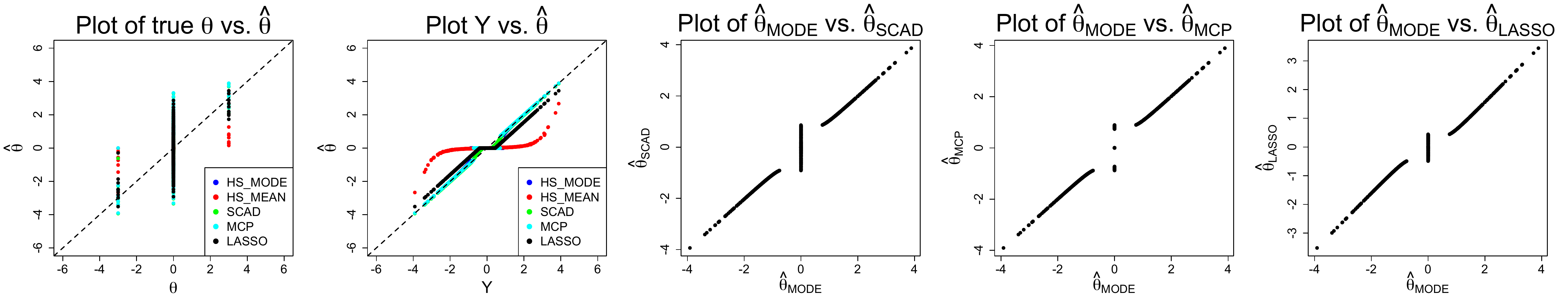}
   	\caption{Simulation results for competing methods on sparse normal means model.}
   	\label{fig:seq}
   \end{figure*}
     \begin{table}[!h]
\begin{center}
\begin{sc}
\begin{tabular}{rccccl}
\hline
 	 	 	 	 	 	 	 & mode  & mean  & Scad  & Mcp  & Lasso \\ 
\hline
 SSE 	 	 	& 890.67 & 111.8 & 1006.45 & 977.98 & 540.5 \\ 
 Cor\_Z 	 	 & 600 & NA & 163 & 515 & 310 \\  
Cor\_Nz 	  & 19 & NA & 20 & 19 & 20 \\  
Time 	 	  	 & 0.82 & 10.75 & 13.21 & 11.637 & 3.16 \\  
\hline
\end{tabular}
\end{sc}
\end{center}
\vskip -0.1in
\caption{Performance comparisons in normal means model for HS posterior mode, HS posterior mean, SCAD, MCP and LASSO. The rows are sum of squared error (SSE), zeros and non-zeros correctly detected (COR\_Z \& COR\_NZ) and time in s. (TIME). \label{tab:seq}}
\end{table}

\subsection{Linear Regression Model} 
We take $n=70, p=350$. The true $\theta\in \mathbb{R}^p$ has components 1--10 are of magnitude 3, components 11--20 are of magnitude $-3$, followed by 330 zeros. The matrix of predictors $X\in \mathbb{R}^{n\times p}$ is generated from i.i.d. standard normals. Finally, the observations are generated as $Y_i \stackrel{ind}\sim \Nor(X\theta, 1)$ for $i=1,\ldots, n$. Figure~\ref{fig:reg} and Table~\ref{tab:reg} document the results. Here the posterior mode has the second best performance in detection of zeros. SCAD detects the highest number of true zeros correctly, but this comes at the expense of a poor performance in detection of non-zeros (7 out of 20) and a poor SSE. The horseshoe posterior mode results in sparser solution compared to MCP and lasso. The mode, MCP and lasso all perform well in the detection of non-zeros. Computation times for all methods (except MCMC) are comparable. As before, the mean has the lowest SSE, but does not give a sparse solution. The poor fit of SCAD in this case can be verified from the second panel from left of Figure~\ref{fig:reg}, where the fitted $\hat Y = X\hat \theta$ values can be seen to be far away from the actual $Y$ values for SCAD.

\begin{figure*}[t!]
   	\centering
	\includegraphics[width=\textwidth, height=6cm]{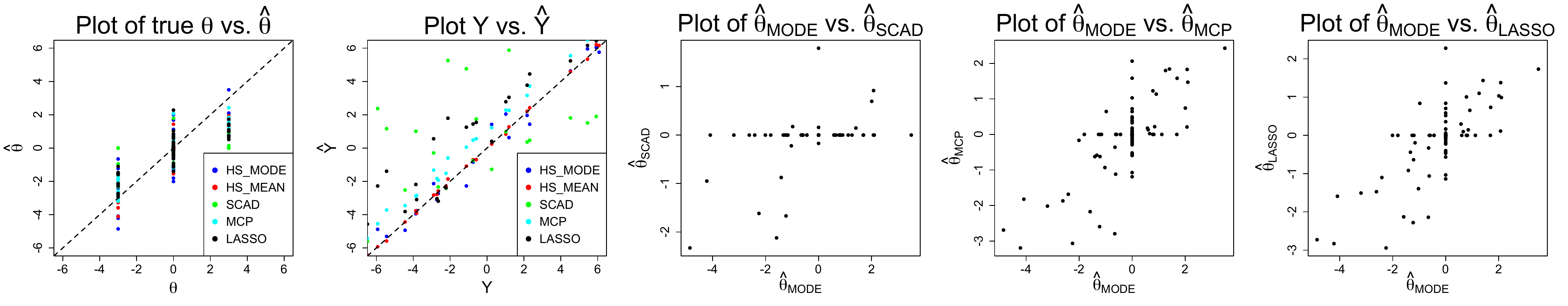}
   	\caption{Simulation results for competing methods on sparse linear regression model.}
   	\label{fig:reg}
   \end{figure*}

\begin{table}[!h]
\begin{center}
\begin{sc}
\begin{tabular}{rccccl}
\hline
 	 	 	 	 	 	 	 & mode  & mean  & Scad  & Mcp  & Lasso \\ 
\hline
 SSE 	 	 	& 91.03 & 44.35 & 143.2 & 42.55 & 66.93 \\ 
 Cor\_Z 	 	 & 302 & NA & 323 & 276 & 292 \\  
Cor\_Nz 	  & 18 & NA & 7 & 20 & 20 \\  
Time 	 	  	 & 0.248 & 14.978 & 0.226 & 0.561 & 0.177 \\  
\hline
\end{tabular}
\end{sc}
\end{center}
\vskip -0.1in
\caption{Performance comparisons in the regression model for HS posterior mode, HS posterior mean, SCAD, MCP and LASSO. The rows are sum of squared error (SSE), zeros and non-zeros correctly detected (COR\_Z \& COR\_NZ) and time in s. (TIME). \label{tab:reg}}
\end{table}

\subsection{Comparisons With The Horseshoe Prior}
Since the horseshoe-like prior is a close approximation to the horseshoe prior, it is perhaps instructive to take a closer look at a comparison between the two.
We first demonstrate the performance of the horseshoe-like prior in  a simulation study for estimating a sparse normal mean vector with $(y_i \mid \theta_i) \sim \Nor(\theta_i, \sigma^2)$ and two different choices of $\btheta$: (1) $\btheta_1 = (\underbrace{7,\ldots,7}_{q_n=10},\overbrace{0,\ldots,0}^{n-q_n = 90})$ and (2) $\btheta_2 = (\underbrace{7,\ldots,7}_{q_{n}=10},\underbrace{3,\ldots,3}_{r_n=10}\overbrace{0,\ldots,0}^{n-q_n-r_n = 80})$. The choices are made to test the performance of horseshoe-like prior with sparse signals near the `verge of detectability' $\sqrt{2 \log n}$ \citep{bogdan2011asymptotic} as well as for signals with a relatively large magnitude, e.g. $2\sqrt{2\log n}$ away from origin. Similar to the horseshoe prior, the horseshoe-like prior should be able to identify the signals in both cases. Figure~\ref{fig:sim1} shows the estimated $\hat{\theta}$ under the  horseshoe-like prior, along with the observations $y_i$s and the $95\%$ credible intervals. It is evident that the true signals are recovered in both the cases.

It is also instructive to compare the shrinkage profile of the horseshoe-like prior with that of the horseshoe prior for the second example. Figure~\ref{fig:sim2} shows that although the shrinkage profile for the two priors are very similar, the horseshoe-like prior exerts a slightly stronger shrinkage on the noise observations near zero, but does not shrink the signals both near and far from the $\sqrt{2 \log n}$ boundary.

\begin{figure}[!t]%
\centering
\includegraphics[height=3in,width=\columnwidth]{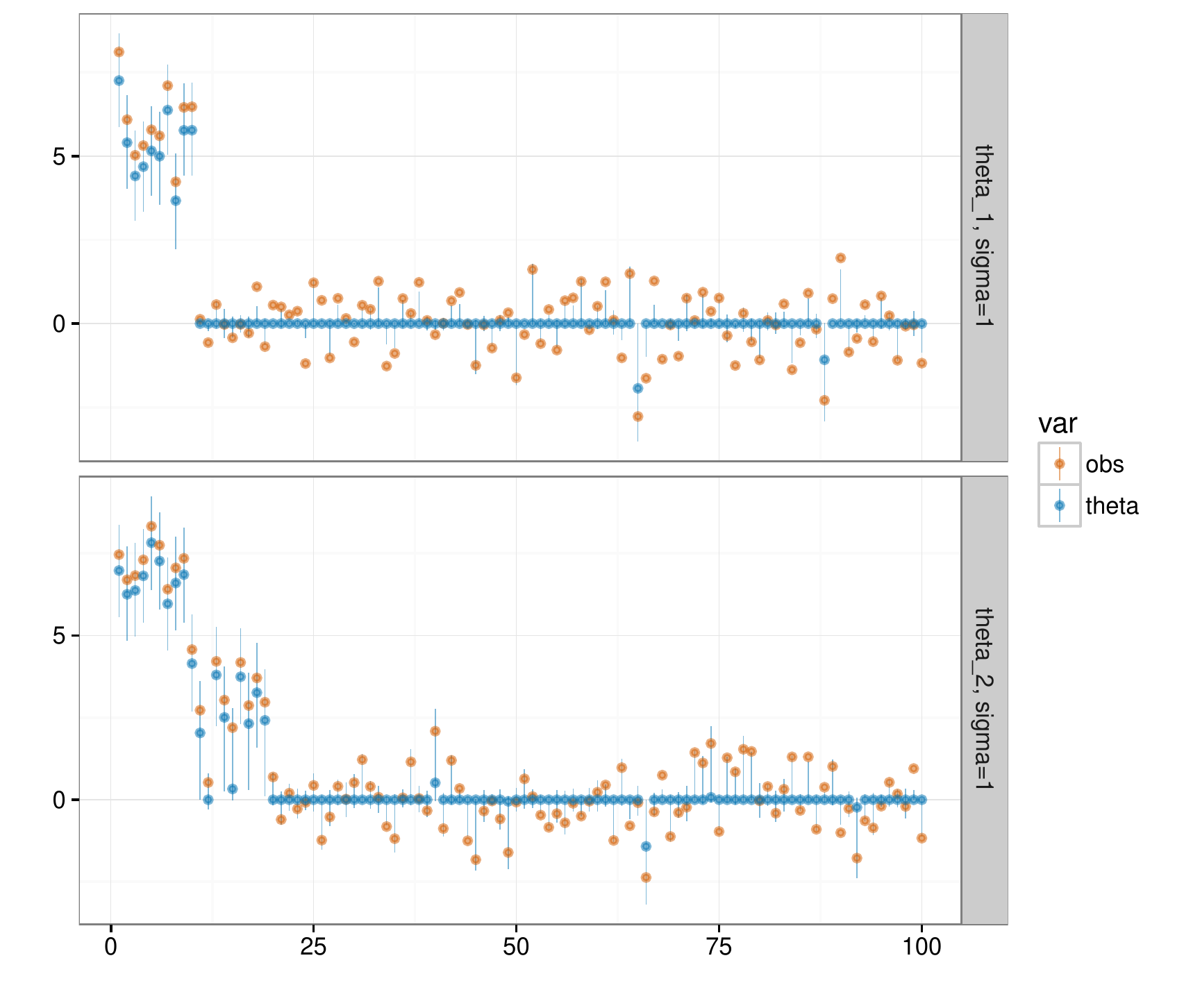}%
\caption{Comparison of posterior mean estimates for two different sparse normal means, $\btheta_1 \sim 0.1 \delta_{\{7\}}+0.9\delta_{\{0\}}$ (top) and $\btheta_2 \sim 0.1 \delta_{\{7\}}+0.1\delta_{\{3\}}+0.8\delta_{\{0\}}$ (bottom) under the horseshoe-like prior. }%
\label{fig:sim1}%
\end{figure}

\begin{figure}[!h]%
\centering
\includegraphics[height=2.3in,width=0.7\textwidth]{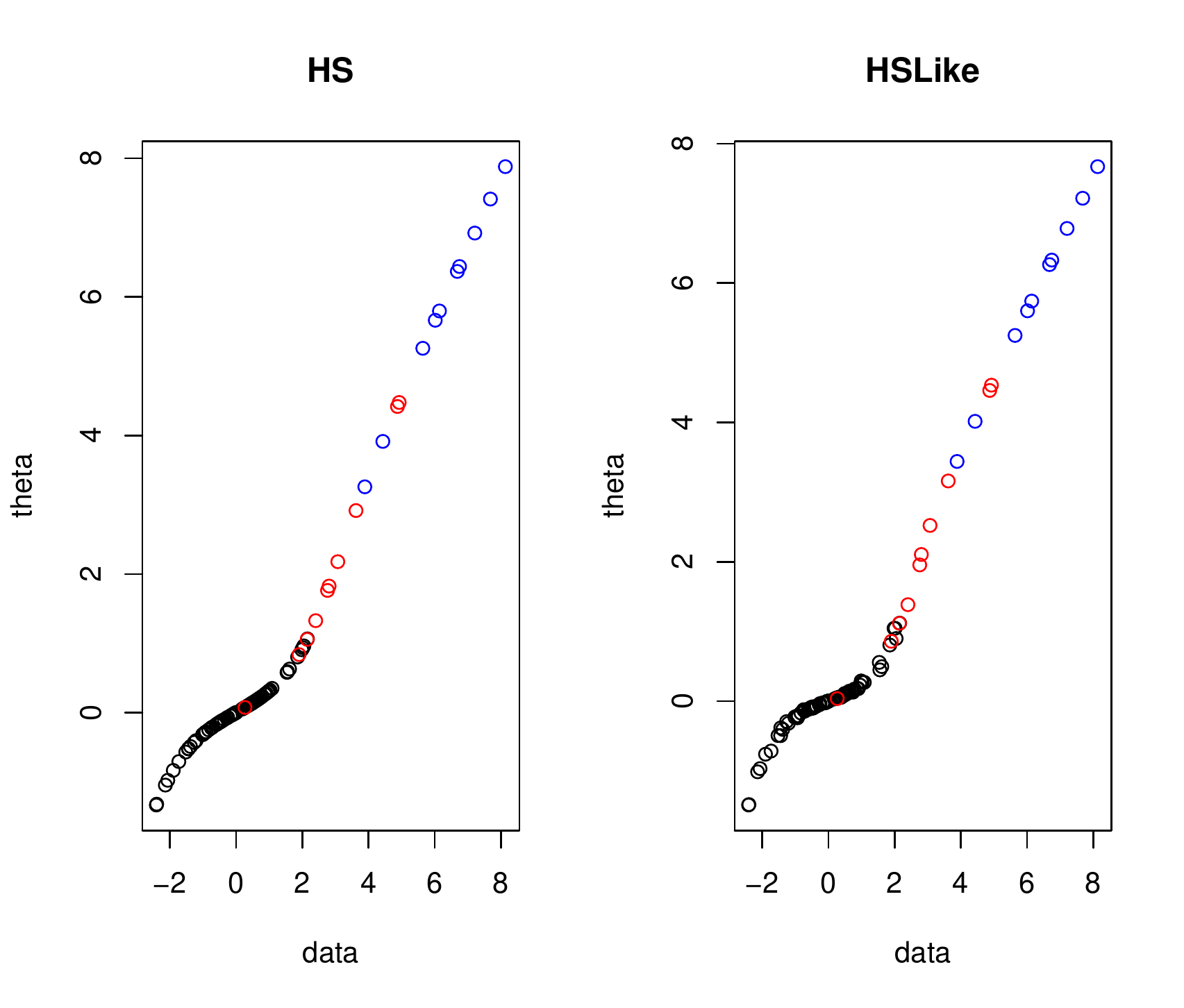}%
\caption{Comparison of posterior mean estimates under the horseshoe (HS) and horseshoe-like (HSLike) priors for $\btheta \sim 0.1 \delta_{\{7\}}+0.1\delta_{\{3\}}+0.8\delta_{\{0\}}$. The black, red and blue circles represent true $\theta_i = 0, 3$ and $7$ respectively. }%
\label{fig:sim2}%
\end{figure}

\section{Leukemia Data Example}
We consider a popular microarray gene expression data set with 3051 genes and 38 leukemia samples \citep{golub1999molecular, dudoit2002comparison}. This is a two class study where the goal is to identify genes that significantly differ between the 27 acute lymphoblastic leukemia (ALL) cases and 11 acute myeloid leukemia (AML) cases. The multiple testing for this data is carried out as follows: first a two-sample $t$-test with 36 degrees of freedom was performed for each 3,051 genes and the $t$-test statistics are converted to $z$-test statistics using the quantile transformation $z_i = \Phi^{-1}(T_{36~\mbox{d.f.}}(t_i))$ for $i = 1, \ldots,  3051$. The $i^{\text{th}}$ null hypothesis $H_{0i}$ posits no difference in the gene expression levels for the $i^{\text{th}}$ gene between the ALL and AML cases, and under the global null hypothesis $\cap H_{0i}$ the histogram of the $z$-values should mimic a $\Nor(0,1)$ curve closely. The histogram of the z-values along with the standard normal curve and a fitted normal density are shown in Figure~\ref{fig:hist}. The departure of the histogram from the normal density curve suggests presence of many genes \emph{differing} between the two classes. 

\begin{figure}[t!]%
\centering
\includegraphics[width=12cm, height=5cm]{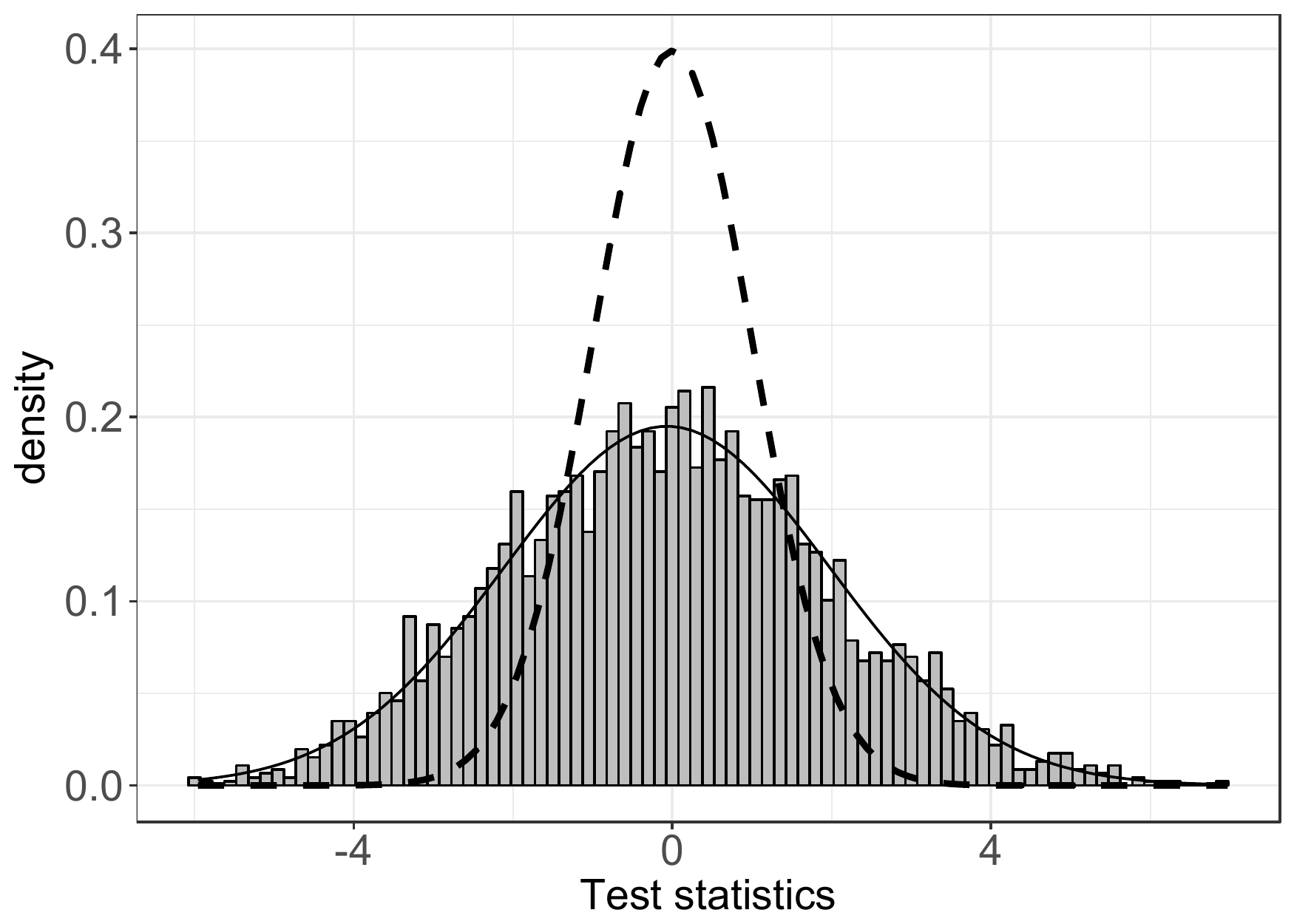}%
\caption{Histogram of $z$-values along with a dashed $\Nor(0,1)$ and a solid $\Nor(\bar z, s)$ density curve, where $\bar z$ and $s$ are the sample mean and standard deviation of the $z$-values.}%
\label{fig:hist}%
\end{figure}

The three classical multiple testing procedures, \textit{viz.} Bonferroni, Benjamini--Hochberg and Bejamini--Yekutieli, identify 98, 681 and 269 genes as significant, by adjusting $p$-values obtained from the test statistics. Given the size of the data, the Bonferroni procedure is overly conservative for large scale testing and Benjamini--Hochberg can be thought of as a recognized gold standard \citep[see, e.g.,][]{efron2010large}. In order to perform subset selection for this data, we compare the horseshoe posterior mode obtained by the proposed EM algorithm along with SCAD, MCP and lasso. It is also possible to use the posterior mean of the horseshoe-like prior with a thresholding rule as in \citet{datta2013asymptotic} for performing multiple testing, but we do not consider it here since it is not a formal subset selection algorithm. Figure~\ref{fig:thetahat} compares the thresholding nature for the candidate methods and shows that the lasso is least conservative (declares 1,395 genes significant)  and the horseshoe posterior mode is the most conservative (declares 987 genes significant) among them. Also, it appears that the three methods except the lasso induce somewhat similar thresholding rules. Figure~\ref{fig:facet} plots the estimated mean parameter $\hat{\theta_i}$s underlying the normal observations $z_i$s with the points color-coded according to the Benjamini--Hochberg multiple testing rule. Once again, it seems that the horseshoe posterior mode performs similarly to SCAD and MCP and the lasso acts in an anti-conservative way, potentially leading to many false discoveries. 

\begin{figure}[t!]%
\centering
\includegraphics[width=12cm, height=5cm]{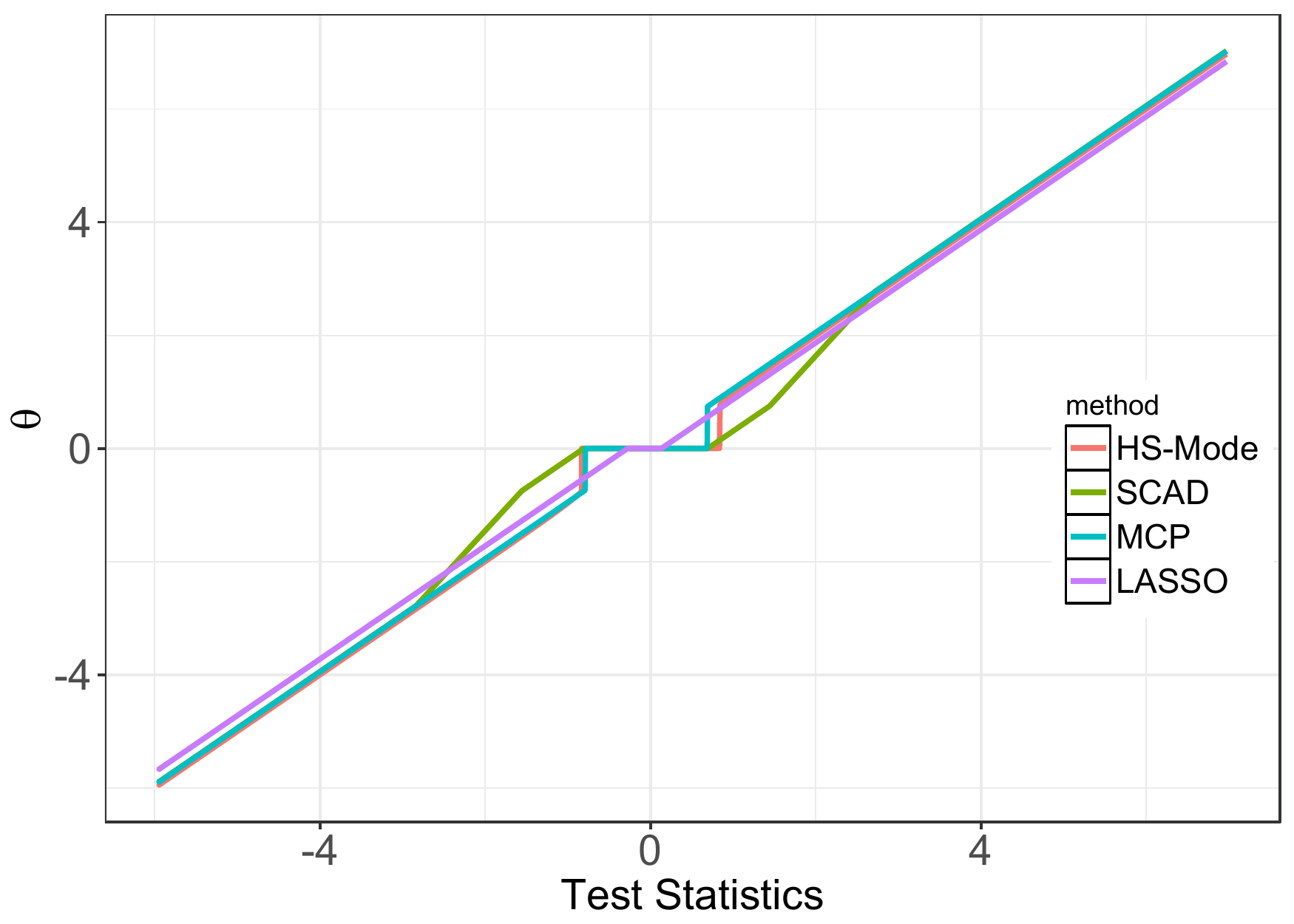}%
\caption{The posterior estimates for the competing methods versus the observed test statistics.}%
\label{fig:thetahat}%
\end{figure}

\begin{figure}[t!]%
\centering
\includegraphics[width=11cm, height=5cm]{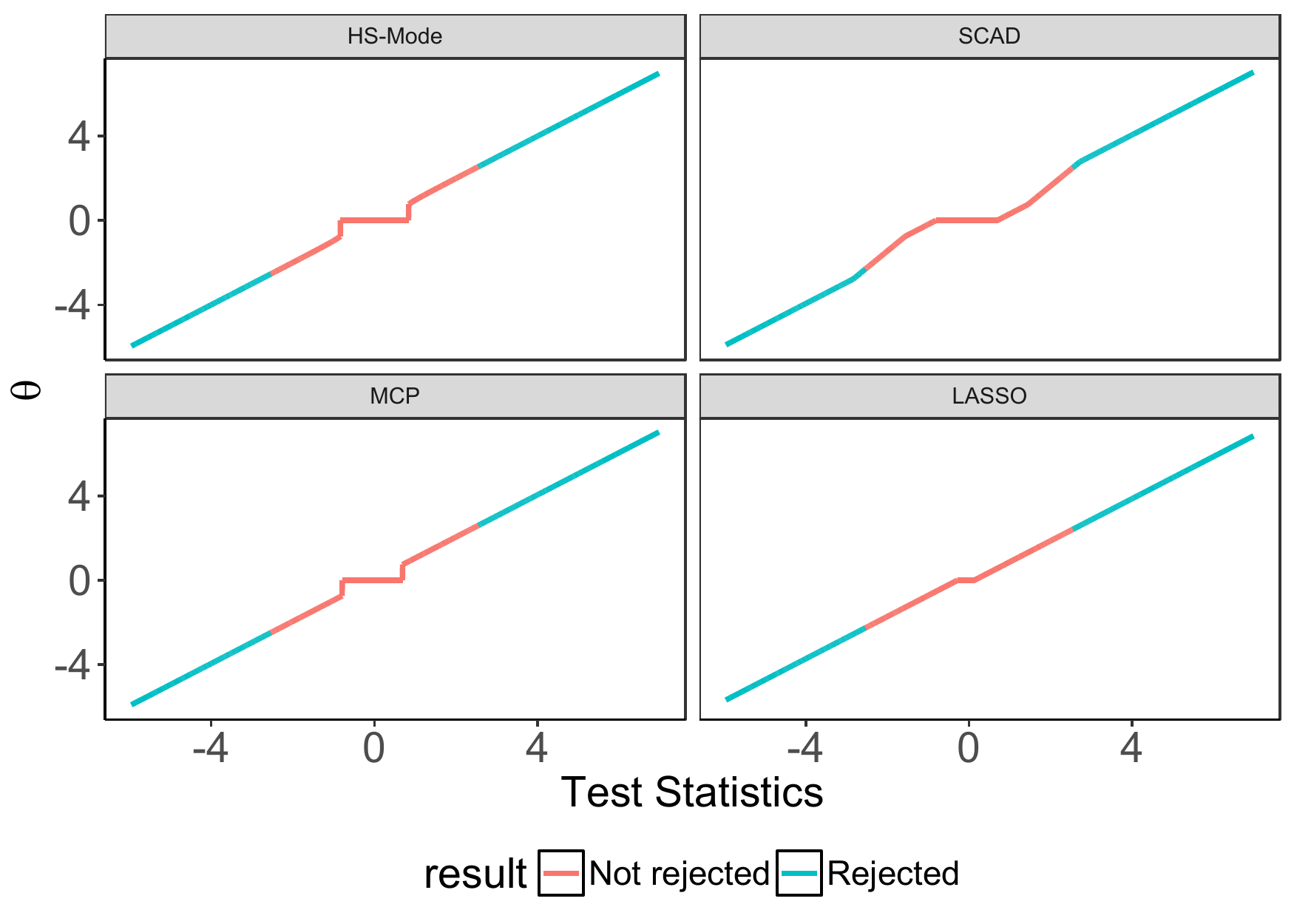}%
\caption{The posterior estimates versus the observed test statistics for different methods, with the points color-coded according to the Benjamini-Hochberg decision rule.}%
\label{fig:facet}%
\end{figure}

\section{Conclusions and Future Work}
We developed novel theoretical insights and fast computational algorithms for subset selection using  the horseshoe regularization penalty. Our approach has a probabilistic representation, which allows for simulating the entire posterior via MCMC, (Section~\ref{sec:mc}), in addition to developing EM and LLA algorithms for identifying MAP point estimates (Section~\ref{sec:em}). In turn, this allows us to contrast the respective strengths and weaknesses of posterior mean and posterior mode. The former typically performs best in estimation under squared error loss, but is not sparse. These attributes are exactly reversed for the latter. In terms of both computational speed and statistical performance, horseshoe regularization outperforms state of the art non-convex solvers such as MCP or SCAD. 

There are a number of directions for future work. For example, some other global-local priors that have shown promise in sparse Bayesian inference include the generalized beta \citep{armagan2011generalized}, the horseshoe+ \citep{bhadra2015horseshoe+, bhadra2015default} and the Dirichlet--Laplace \citep{bhattacharya2014dirichlet}, to name a few. An open question is how these priors perform in terms of subset selection and whether fast computational algorithms are available. Following the recommendation of \citet{gelman2006prior}, we used a standard half-Cauchy ($C^{+} (0,1)$) prior in (\ref{eq:horsedens}), similar to the original horseshoe formulation \citep{carvalho2009handling, polson2012half}. However, results in \citet{piironen2016hyperprior} indicate it will be interesting to investigate the effect of the hyper-parameter $\eta$ in a $C^{+}(0,\eta)$ prior in subset selection.

A more general family of proper prior densities  can be constructed as follows:
\begin{eqnarray*}
p(\theta_i\mid \tau)  &\propto& 
 \begin{cases}
     \frac{1}{\theta_i^{1-\epsilon}} \log\left(1 + \frac{\tau^2}{\theta_i^2}\right), & \text{if  } |\theta_i| < 1,\\
     \theta_i^{1-\epsilon} \log\left(1 + \frac{\tau^2}{\theta_i^2}\right),  & \text{if  } |\theta_i| \ge 1,
    \end{cases}
    \end{eqnarray*}
for $\epsilon\ge 0, \tau>0$, which reduces to the horseshoe-like prior of Equation (\ref{eq:ptilde}) for $\epsilon=1$. Furthermore, the density is approximately equal to $\theta_i^{1-\epsilon}\log(\theta_i^{-1})$ near the origin and the tails decay as $\theta_i^{-(1+\epsilon)}$. Thus, the main features of the horseshoe prior, that is, unboundedness at the origin and polynomially decaying tails, are preserved. The parameter $\epsilon$ represents a tradeoff between tail-heaviness and peakedness at the origin. For $\epsilon\in (0,1)$, the tails are heavier compared to the horseshoe, but at the cost of a 
smaller peak at the origin. The opposite is true for $\epsilon>1$. Detailed investigation of this broader class of priors should be considered future work.

\appendix
\setcounter{equation}{0}
\setcounter{table}{0}
\setcounter{section}{0}
\setcounter{figure}{0}
\renewcommand{\theequation}{A.\arabic{equation}}
\renewcommand{\theresult}{A.\arabic{result}}
\renewcommand{\thesubsection}{A.\arabic{subsection}}
\renewcommand{\thelemma}{A.\arabic{lemma}}

\section{Proof of Proposition~\ref{prop:marginal}}\label{app:marginal}
The hierarchy for the horseshoe-like prior can be written as:
\begin{align*}
y_i \mid \theta_i, \sigma^2 & \sim \NormRV(\theta_i, \sigma^2), \; \text{where} \; \sigma^2 \sim \text{Inverse-Gamma}(\alpha,\beta), \\
p(\theta_i \mid \tau) & =  \frac{1}{2 \pi \tau}\log \left( 1 + \frac{\tau^2}{\theta_i^2} \right).
\end{align*}
Here we treat $\tau^2$ as a tuning parameter. The marginal density of $y_i$ is: 
\beq
m(y_i \mid \tau) = \intreal \intpos \frac{1}{\sqrt{2 \pi} \sigma} e^{-\half \frac{(y_i - \theta_i)^2}{2\sigma^2}} \frac{1}{2 \pi \tau}\log \left( 1 + \frac{\tau^2}{\theta_i^2} \right) \frac{\beta^{\alpha}}{\Gamma(\alpha)} (\sigma^2)^{-\alpha-1} e^{-\frac{\beta}{\sigma^2}} d\theta_i d \sigma^2. \label{eq:full}
\eeq
First, integrating out $\sigma^2$ under the $\text{Inverse-Gamma}(\alpha,\beta)$ hyper-prior gives the marginal likelihood: 
\beq
p(y_i \mid \theta_i, \alpha, \beta) = \frac{\Gamma(\alpha+\half)}{\Gamma(\alpha)\Gamma(\half)} \beta^{-1/2} \frac{1}{ \left(1+\frac{1}{2\beta}\left(y_i-\theta_i \right)^2\right)^{\alpha+\half}}, \; \alpha, \beta > 0. \label{eq:int-like}
\eeq
For the special case $\alpha = \beta = \half$, the marginal $t$ distribution of $y_i$ in \eqref{eq:int-like} is into a Cauchy distribution with location $\theta_i$, i.e. 
\[
p(y_i \mid \theta_i) = \frac{1}{\pi\{ 1+(y_i - \theta_i)^2 \}}.
\]
Now, using Lemma \ref{lemma:cauchy}, we can write the horseshoe-like prior as a $\UnifRV(0,1)$ scale mixture of Cauchy, or, in other words, $(\theta_i \mid \lambda_i, \tau) \sim \CauchyRV(0,\lambda_i \tau)$ and $\lambda_i \sim \UnifRV(0,1)$. This hierarchy implies that the horseshoe-like prior is a member of the global-local mixtures described in \cite{bhadra2016global}, where the local shrinkage parameter has a $\UnifRV(0,1)$ prior, commonly used for the global shrinkage parameter $\tau$. In a recent article, \cite{van2016many} argue that restriction of the prior mass of $\tau$ to the interval $[1/n,1]$ helps in achieving near-minimax rates as well as preventing degeneracy of the estimates of $\tau$. Using this scale mixture representation in the hierarchy we can write: 
\[
m(y_i \mid \tau) = \intreal \intunit \frac{1}{\pi\{ 1+(y_i - \theta_i)^2 \}} \frac{1}{\pi} \frac{\lambda_i \tau}{\lambda_i^2 \tau^2 + \theta_i^2} d\lambda_i. 
\]
Equivalently, 
\begin{align*}
Y_i - \theta_i & \sim \CauchyRV(0,1), \; \theta_i \sim \CauchyRV(0,\lambda_i \tau), \; \text{and } \lambda_i \sim \UnifRV(0,1), \\
\Rightarrow Y_i = (Y_i - \theta_i) & + \theta_i \iidd \CauchyRV(0,1) + \lambda_i \tau \; \CauchyRV(0,1) \iidd \CauchyRV(0, 1+\lambda_i\tau) , \; \text{and } \lambda_i \sim \UnifRV(0,1).
\end{align*}
The last equation follows from the following lemma (\textit{vide} \cite{bhadra2016global} for a proof using the \CS{} integral identity). 
\begin{lemma}
  Let $X_i \sim \CauchyRV(0,1)$ $(i = 1, 2)$ be Cauchy distributed random variates, then $Z = w_1 X_1 + w_2 X_2 \sim \CauchyRV( 0, w_1 + w_2).$ where $w_1,w_2 > 0$.
\end{lemma}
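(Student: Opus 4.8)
The plan is to prove this by characteristic functions, which is the quickest route and avoids any partial-fraction bookkeeping; I will also indicate the self-contained convolution argument, since that is the form in which the result is cited from \cite{bhadra2016global}.

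First I would record two elementary reductions. Since $X_i \sim \CauchyRV(0,1)$ has density $\frac{1}{\pi}\frac{1}{1+x^2}$, a change of variables shows that for $w_i>0$ the scaled variate $w_i X_i$ has density $\frac{1}{\pi}\frac{w_i}{w_i^2 + x^2}$, i.e.\ $w_i X_i \sim \CauchyRV(0,w_i)$; positivity of $w_i$ is exactly what makes the Jacobian clean. I would also note, as is implicit in the way the lemma is applied (with $X_1 = Y_i-\theta_i$ and $X_2$ proportional to $\theta_i$, conditionally on $\lambda_i$), that $X_1$ and $X_2$ are taken to be independent.

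Next I would invoke the characteristic function of a standard Cauchy, $\E e^{\mathrm{i} t X} = e^{-|t|}$, so that $\E e^{\mathrm{i} t (w_i X_i)} = e^{-w_i |t|}$ (again using $w_i>0$). By independence of $X_1$ and $X_2$,
\[
\E e^{\mathrm{i} t Z} \;=\; \E e^{\mathrm{i} t w_1 X_1}\,\E e^{\mathrm{i} t w_2 X_2} \;=\; e^{-w_1 |t|}\,e^{-w_2|t|} \;=\; e^{-(w_1 + w_2)|t|},
\]
which is precisely the characteristic function of $\CauchyRV(0, w_1 + w_2)$. By uniqueness of characteristic functions, $Z \sim \CauchyRV(0, w_1+w_2)$, completing the argument.

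For the version cited from \cite{bhadra2016global} I would instead compute the convolution
\[
p_Z(z) \;=\; \intreal \frac{1}{\pi}\frac{w_1}{w_1^2 + t^2}\cdot\frac{1}{\pi}\frac{w_2}{w_2^2 + (z-t)^2}\,dt
\]
and show it equals $\frac{1}{\pi}\frac{w_1 + w_2}{(w_1+w_2)^2 + z^2}$, either by residue calculus or by a \CS{}-type substitution that collapses the product of the two quadratics. The only place any care is needed is the elementary partial-fraction algebra (and the degenerate case $w_1=w_2$, handled by a limiting argument), so the characteristic-function proof is the one I would present. I do not expect a genuine obstacle here: the content of the lemma is just the closure of the Cauchy family under independent scaled sums, and the main point is simply to flag that positivity of $w_1,w_2$ and independence of $X_1,X_2$ are the hypotheses actually being used.
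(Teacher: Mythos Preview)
Your proof via characteristic functions is correct and complete; you also rightly make explicit the independence assumption that the lemma leaves implicit but that is essential both for the factorization of characteristic functions and for the convolution to represent the density of the sum.

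The paper itself does not give an in-line proof; it simply refers to \cite{bhadra2016global}, where the result is obtained by computing the convolution directly and evaluating it with the Cauchy--Schl\"omilch integral identity. That route works at the level of densities and yields the closed form $\frac{1}{\pi}\frac{w_1+w_2}{(w_1+w_2)^2+z^2}$ without Fourier analysis, which is attractive in a paper whose broader toolkit is integral identities (Frullani, \CS{}) rather than characteristic functions. Your approach is shorter, avoids any partial-fraction or residue bookkeeping, and sidesteps the $w_1=w_2$ edge case entirely; its only cost is importing the Fourier pair $\CauchyRV(0,1)\leftrightarrow e^{-|t|}$, which is standard. Since you also sketch the convolution/\CS{} alternative, your write-up effectively subsumes the paper's cited argument.
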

Hence the marginal of $y_i$ is:
\begin{align*}
m(y_i \mid \tau) & = \intunit \frac{1}{\pi(1+\lambda_i \tau)\left[ 1+ \left\{ \frac{y_i}{(1+\lambda_i \tau)} \right\}^2 \right]} d\lambda_i \\
& = \frac{1}{\pi} \intunit \frac{(1+\lambda_i \tau)}{\left\{ (1+\lambda_i \tau)^2+ y_i^2 \right\} } d\lambda_i \\
& = \frac{1}{2\pi\tau} \int_1^{(1+\tau)^2} \frac{dt}{t + y_i^2} = \frac{1}{2\pi\tau} \log \left(1 + \frac{\tau^2}{1+y_i^2} \right). 
\end{align*}

\bibliographystyle{biom}
\bibliography{horseshoe-review,horseshoe-plus,hslike_ref}

\end{document}